\definecolor{dark-gray}{gray}{0.3}
\newcommand{\norm}[1]{\left\lVert#1\right\rVert}
\DeclareMathOperator{\sech}{sech}
\newtheorem{lemma}{Lemma} 
\newtheorem{definition}{Definition} 
\newtheorem{proposition}{Proposition}
\newtheorem{theorem}{Theorem}
\newtheorem{example}{Example}
\begin{document}
\title{Hamiltonian Deep Neural Networks Guaranteeing Non-vanishing Gradients by Design}
\author{Clara Luc\'{i}a Galimberti, Luca Furieri, Liang Xu and Giancarlo Ferrari-Trecate
    \thanks{ Clara Luc\'{i}a Galimberti, Luca Furieri, and Giancarlo Ferrari-Trecate are with the Institute of Mechanical Engineering, École Polytechnique Fédérale de Lausanne, Switzerland. E-mails: {\tt\footnotesize \{clara.galimberti,luca.furieri,  giancarlo.ferraritrecate\}@epfl.ch}. Liang Xu is with the Institute of Artificial Intelligence, Shanghai University, Shanghai, China. E-mail: {\tt \footnotesize liang-xu@shu.edu.cn}.} 
    \thanks{Research supported by the Swiss National Science Foundation under the NCCR Automation (grant agreement 51NF40\textunderscore 180545).}}

\maketitle

\begin{abstract}
    Deep Neural Networks (DNNs) training  can be difficult due to vanishing and exploding gradients during weight optimization through backpropagation.
    To address this problem, we propose a general class of Hamiltonian DNNs (H-DNNs) that stem from the discretization of continuous-time Hamiltonian systems and include several existing DNN architectures based on ordinary differential equations.
    Our main result is that a broad set of H-DNNs ensures non-vanishing gradients by design for an arbitrary network depth. This is obtained by proving that, using a semi-implicit Euler discretization scheme, the backward sensitivity matrices involved in gradient computations are symplectic. 
    We also provide an upper-bound to the magnitude of sensitivity matrices and show that exploding gradients can be controlled through regularization.
    Finally, we enable distributed implementations of backward and forward propagation algorithms in H-DNNs by characterizing appropriate sparsity constraints on the weight matrices. The good performance of H-DNNs is demonstrated on benchmark classification problems, including image classification with the MNIST dataset.
\end{abstract}

\begin{IEEEkeywords}
Deep neural networks, Distributed learning, Hamiltonian systems, ODE discretization
\end{IEEEkeywords}

\section{Introduction}
\label{sec:introduction}

Deep learning has achieved remarkable success in various fields like computer vision, speech recognition and natural language processing~\cite{He2016, XiongNLP}.
Within the control community, there is also a growing interest in using DNNs to approximate complex controllers~\cite{Lucia2018ifac, Zoppoli2020book}.
In spite of recent progress, the training of DNNs still presents several challenges such as the occurrence of vanishing or exploding gradients during training based on gradient descent. These phenomena are related to the convergence to zero or the divergence, respectively, of the Backward Sensitivity Matrices (BSMs)\footnote{The BSM of a neural network denotes the sensitivity of the output of the last layer with respect to the output of intermediate layers. Its formal definition can be found in Section~\ref{sec:cost}.} arising in gradient computations through backpropagation. Both situations are very critical as they imply that the learning process either stops prematurely or becomes unstable~\cite{GoodBengCour2016}. 

Heuristic methods for dealing with these problems leverage subtle weight initialization or gradient clipping~\cite{GoodBengCour2016}.
More recent approaches, instead, focus on the study of DNN architectures and associated training algorithms for which exploding/vanishing gradients can be avoided or mitigated \textit{by design}~\cite{RN11052, RN11546, RN11545, RN11544, RN11399 , Haber_2017, haber2017, Chang19, lu2017finite, Weinan2017}.

For instance, in ~\cite{RN11052, RN11545, RN11544} unitary and orthogonal weight matrices are used to control the magnitude of BSMs during backpropagation. Moreover, in~\cite{RN11399, RN11546}, methods based on clipping singular values of weight matrices are utilized to constrain the magnitude of BSMs. These approaches, however, require expensive computations during training~\cite{RN11399, RN11546}, introduce perturbations in gradient descent~\cite{RN11399, RN11546} or use restricted classes of weight matrices~\cite{RN11052, RN11545, RN11544}.

Recently, it has been argued that specific classes of DNNs stemming from the time discretization of Ordinary Differential Equations (ODEs) are less affected by vanishing and exploding gradients~\cite{Haber_2017, haber2017, Chang19, lu2017finite, Weinan2017}.
The arguments provided in~\cite{Haber_2017} rely on the stability properties of the underlying continuous-time nonlinear systems for characterizing relevant behaviors of the corresponding DNNs obtained after discretization.
Specifically, instability of the system results in unstable forward propagation for the DNN model, while convergence to zero of system states is related to the occurrence of vanishing gradients.
This observation suggests using DNN architectures based on dynamical systems that are \textit{marginally stable}, i.e. that produce bounded and non-vanishing state trajectories.
An example is provided by first-order ODEs based on skew-symmetric maps, which have been used in~\cite{Haber_2017, Chang19} for defining \emph{anti-symmetric DNNs}.
Another example is given by dynamical systems in the form
\begin{align}\label{eq:simpleHamiltonianSystems}
\dot{\bf p}=-\nabla_{\bf q} H({\bf p},{\bf q})\,, \quad \dot{\bf q}=\nabla_{\bf p} H({\bf p},{\bf q})\,,
\end{align}
where ${\bf p}, {\bf q}\in\mathbb{R}^n$ and $H(\cdot, \cdot)$ is a Hamiltonian function.
This class of ODEs has motivated the development of \textit{Hamiltonian-inspired} DNNs in~\cite{Haber_2017}, whose effectiveness has been shown in several benchmark classification problems \cite{Haber_2017, Chang19, Chang18a}.

However, these approaches consider only restricted classes of weight matrices or particular Hamiltonian functions, which, together with the specific structure of the dynamics in~\eqref{eq:simpleHamiltonianSystems}, limit the representation power of the resulting DNNs. Moreover, the behavior of BSMs arising in backpropagation has been analyzed only in \cite{Chang19}, which however focuses on DNNs with identical weights in all layers and relies on hard-to-compute quantities such as kinematic eigenvalues \cite{van2004characteristic,BookAscher95}.

The architectures in \cite{Haber_2017, Chang19} and \cite{Chang18a} are conceived for centralized implementations of forward and backward propagation.
As such, they do not cope with the constraints of large networks of geographically distributed nodes with own sensing and computational capabilities, such as Internet of Things devices and multi-vehicle systems for surveillance and scanning tasks. In such scenarios, each node captures a very large stream of input data 
that approaches its memory, bandwidth and battery capabilities. 
However, in order to take system-wide optimal decisions, the measurements gathered by all nodes must be processed simultaneously, which cannot be done in a single location due to physical and computational limitations (see the recent surveys \cite{skala2015scalable,teerapittayanon2017distributed,ben2019demystifying}).  It is therefore important to develop large-scale DNN models for which the training can be distributed between physically separated end devices while guaranteeing satisfactory system-wide predictions. Furthermore, distributed DNN architectures enhance data privacy and fault tolerance~\cite{teerapittayanon2017distributed}, facilitate the learning from graph inputs~\cite{zhou2020graph} and enable the execution of distributed control tasks~\cite{gama2021distributed, yang2021communication}. 

\subsection{Contributions}
The contribution of this paper is fivefold.
First, 
leveraging general models of time-varying Hamiltonian systems~\cite{vanderSchaft2017}, in this work we provide a unified framework for defining H-DNNs, which encompass anti-symmetric \cite{Haber_2017, Chang19} and Hamiltonian-inspired networks~\cite{Haber_2017}.

Second,
for H-DNNs stemming from Semi-Implicit Euler (S-IE) discretization, we prove that the norm of the associated BSMs can never converge to zero irrespective of the network depth and weights. 
This result hinges on the symplectic properties of BSMs,\footnote{The symplectic property used in this paper is rigorously introduced in Definition~\ref{def:symplectic} and is slightly different from standard definitions that can be found in the literature~\cite{deGosson2011book, Hairer2006book}.} and is first shown in the continuous-time setting and then for H-DNNs models. For the result in the discrete-time case, we leverage developments in the field of geometric numerical integration~\cite{Hairer2006book,deGosson2011book}.

Third,
we then analyze the phenomenon of exploding gradients. We  construct an example showing that it cannot be avoided for general H-DNNs. However, exploding gradients can be kept under control by including suitable regularization terms in the training cost. 

Forth, we show how to design distributed H-DNN architectures by imposing sparsity constraints on matrix weights. To this purpose, we provide sufficient conditions on the sparsity patterns for distributing forward- and backward-propagation algorithms over computational nodes linked by a communication network.

Finally, 
we provide numerical results on benchmark classification problems demonstrating the flexibility of H-DNNs and showing that thanks to the absence of vanishing gradients, H-DNNs can substantially outperform standard multilayer perceptron (MLP) networks.

At a more conceptual level, our results show the potential of combining methods from system theory (as done in~\cite{RN11084, pauli2021training}) and numerical analysis (as done in \cite{Haber_2017, Chang19} and \cite{Chang18a}) 
for characterizing and analyzing relevant classes of deep networks. 
We also highlight that H-DNNs are fundamentally different from the neural networks proposed in~\cite{RN10758}, which have the same name but are designed to learn the Hamiltonian functions of mechanical systems.

A preliminary version of this work has been presented in the L4DC conference~\cite{ClaraL4DC}.
Compared with~\cite{ClaraL4DC}, this paper is not restricted to H-DNNs stemming from forward Euler discretization. Moreover, for studying exploding/vanishing gradients we do not focus on H-DNNs with constant weights across layers and we do not rely on eigenvalue analysis. Finally, differently from~\cite{ClaraL4DC}, we consider the design of distributed H-DNN architectures.

The remainder of our paper is organized as follows.
H-DNNs are defined in Section~\ref{sec:H-DNN}.
Their properties are analyzed in Section~\ref{sec:CTAnalysis} and Section~\ref{sec:DT_analysis} from a continuous-time and a discrete-time perspective, respectively.
Numerical examples are discussed in Section~\ref{sec:numerical_experiments} and
concluding remarks are provided in Section~\ref{sec:Conclusion}.

\subsection{Notation}
We use $0_{m \times n}$ ($1_{m \times n}$) to denote the matrix of all zeros (all ones) of dimension $m \times n$,
$I_n$ to denote the identity matrix of size $n \times n$,  
$0_n$ to denote the square zero matrix of dimension $n \times n$ and  $1_n$ to denote the vector of all ones of length $n$.
For a vector $\mathbf{x} \in \mathbb{R}^n$,  $\mathrm{diag}({\bf x})$ is the $n \times n$ diagonal matrix with the elements of ${\bf x}$ on the diagonal. For vectors $\mathbf{x} \in \mathbb{R}^n$, $\mathbf{y} \in \mathbb{R}^m$, we denote the vector $\mathbf{z} \in \mathbb{R}^{m+n}$ stacking them one after the other as $\mathbf{z} = (\mathbf{x},\mathbf{y})$. 
We adopt the denominator layout for derivatives, that is, the derivative of $\mathbf{y}\in\mathbb{R}^m$ with respect to $\mathbf{x}\in\mathbb{R}^n$ is 
$\frac{\partial \mathbf{y}}{\partial \mathbf{x}} \in \mathbb{R}^{n \times m}$.

\section{Hamiltonian Deep Neural Networks}\label{sec:H-DNN}

This section, besides providing a short introduction to DNNs defined through the discretization of nonlinear systems, presents all the ingredients needed for the definition and implementation of H-DNNs. Throughout the paper, we focus on classification tasks since they have been used as benchmarks for similar architectures (see Section~\ref{sec:numerical_experiments}). However, the main results apply to regression tasks as well, which only require modifying the output layer of the network.  We also illustrate how H-DNNs generalize several architectures recently appeared in the literature. 
Finally, we introduce the problem of vanishing/exploding gradients, which is analyzed in the rest of the paper.

\subsection{DNN induced by ODE discretization} \label{sec:ODEinspirednet}

We consider the first-order nonlinear dynamical system
\begin{equation}
\dot{\bf y}(t) = {\bf f}({\bf y}(t), {\boldsymbol{\theta}}(t))\,, \quad 0 \leq t \leq T\,,
\label{eq:firstorderODE}
\end{equation}
where $ {\bf y}(t)\in \mathbb{R}^n$, $ {\bf y}(0) = {\bf y}_0$ and  ${\boldsymbol{\theta}}(t)\in \mathbb{R}^{n_\theta}$ is a vector of parameters. For specifying a DNN architecture, we discretize \eqref{eq:firstorderODE} with sampling period $h = \frac{T}{N}$, $N \in \mathbb{N}$ and utilize the resulting  discrete-time equations for defining each of the $N$ network layers~\cite{chen2018neural}. For instance, using Forward Euler (FE) discretization, one obtains
\begin{equation}
{\bf y}_{j+1} = {\bf y}_j + h\, {\bf f}({\bf y}_j, {\boldsymbol{\theta}}_j)\,, \quad j=0,1,\dots,N-1\,.
\label{eq:firstorderODE_td}
\end{equation}
The above equation can be seen as the model of a residual neural network~\cite{He2016}, where ${\bf y}_j$ and ${\bf y}_{j+1} \in \mathbb{R}^{n}$ represent the input and output of layer $j$, respectively.

Clearly, it may be convenient to replace FE with more sophisticated discretization methods, depending on the desired structural properties of the DNN. For instance, the authors of ~\cite{Haber_2017} use Verlet discretization for a specific class of DNNs. Later in this work, we show that S-IE discretization will allow us to formally prove that the phenomenon of vanishing gradients cannot occur in H-DNNs. 

A remarkable feature of ODE-based DNNs is that their properties can be studied by using nonlinear system theory for analyzing the continuous-time model \eqref{eq:firstorderODE}. Further, this allows one to study the effect of discretization independently. 

\subsection{From Hamiltonian dynamics to H-DNNs}

We consider the neural network architectures inspired by  time-varying Hamiltonian systems~\cite{Guo2006,vanderSchaft2017} defined as
\begin{equation}
\dot{\mathbf{y}}(t) = {\bf J}(t)  \frac{\partial H({\bf y}(t),t)}{\partial {\bf y}(t)}\,, \quad {\bf y}(0) = {\bf y}_0\,,
\label{eq:TV_HS}
\end{equation}
where ${\bf J}(t) \in \mathbb{R}^{n\times n}$ is skew-symmetric i.e. ${\bf J}(t) = -{\bf J}^\top(t)$ at all times and the continuously differentiable function $H:\mathbb{R}^n \times \mathbb{R}\rightarrow \mathbb{R}$  is the \emph{Hamiltonian function}.

In order to recover the DNNs proposed in \cite{Haber_2017, Chang19, Chang18a}, we consider the following Hamiltonian function
\begin{equation}
H({\bf y}(t),t) = \left[ \tilde{\sigma}({\bf K}(t) {\bf y}(t) + {\bf b}(t)) \right]^\top 1_{n}\,,
\label{eq:nlH}
\end{equation}
where $\tilde{\sigma}:\mathbb{R}\rightarrow \mathbb{R}$ is a differentiable map, applied element-wise when the argument is a matrix, and the derivative of $\tilde{\sigma}(\cdot)$ is called  \emph{activation function} $\sigma(\cdot)$. Specifically, as it is common for neural networks, we consider  activation functions $\sigma:\mathbb{R}\rightarrow \mathbb{R}$ that are differentiable almost everywhere and such that
\begin{equation}
    \label{eq:sigma_bound}
    |\sigma'(x)|\leq S\,,
\end{equation}
for some $S>0$, where $\sigma'(x)$ denotes any sub-derivative. This assumption holds for  common activation functions such as  $\tanh(\cdot)$, $\text{ReLU}(\cdot)$, and the logistic function.
Notice that
\begin{align}
\frac{\partial H({\bf y}(t),t)}{\partial {\bf y}(t)}  &= \frac{\partial ({\bf K}(t) {\bf y}(t) + {\bf b}(t))}{\partial {\bf y}(t)}  \frac{\partial H({\bf y}(t),t)}{\partial ({\bf K}(t) {\bf y}(t) + {\bf b}(t))} \nonumber\\
&= {\bf K}^\top(t) \sigma({\bf K}(t) {\bf y}(t) + {\bf b}(t)) \,,\label{eq:partialH}
\end{align}
and, therefore, system~\eqref{eq:TV_HS} can be rewritten as
\begin{equation}
\dot{\bf y}(t) = {\bf J}(t) {\bf K}^\top(t) \sigma( {\bf K}(t) {\bf y}(t) + {\bf b}(t) ) \,, \quad {\bf y}(0) = {\bf y}_0\,.
\label{eq:ODE_H}
\end{equation}

The ODE \eqref{eq:ODE_H} serves as the basis for defining H-DNNs. Indeed, as outlined in the previous subsection, a given discretization scheme for \eqref{eq:ODE_H} naturally leads to a neural network architecture. In the context of this work, we focus on FE and S-IE discretizations, resulting in the following DNN architectures.

\vspace{0.1cm}

\noindent\textbf{H$_1$-DNN:} By discretizing \eqref{eq:ODE_H} with FE we obtain the layer equation
\begin{equation}
{\bf y}_{j+1} = {\bf y}_j + h\, {\bf J}_j\, {\bf K}^\top_j \sigma({\bf K}_j {\bf y}_j +{\bf b}_j)\,,
\label{eq:H-DNN_fE}
\end{equation}
which can be interpreted as the Hamiltonian counterpart of~\eqref{eq:firstorderODE_td}. 

\vspace{0.1cm}

\noindent\textbf{H$_2$-DNN:} Assume that the number of features $n\in \mathbb{N}$ is even\footnote{This condition can be always fulfilled by performing feature augmentation~\cite{dupont2019augmented}.} and split the feature vector as $\mathbf{y}_j = ({\bf p}_j, {\bf q}_j) $ for  $j= 0,\dots,N$ where ${\bf p}_j, {\bf q}_j \in \mathbb{R}^{\frac{n}{2}}$. Further, assume that $\mathbf{J}_j = \mathbf{J}$ does not vary across layers. Then, S-IE discretization of \eqref{eq:ODE_H} leads to the layer equation 
\begin{align}\nonumber
\begin{bmatrix}
{\bf p}_{j+1} \\ {\bf q}_{j+1}
\end{bmatrix}
&=
\begin{bmatrix}
{\bf p}_{j} \\ {\bf q}_{j}
\end{bmatrix}
+
h\,
{\bf J}
\begin{bmatrix}
\frac{\partial H}{\partial{\bf p}}({\bf p}_{j+1},{\bf q}_{j}, t_j) \\ \frac{\partial H}{\partial{\bf q}}({\bf p}_{j+1},{\bf q}_{j}, t_j)
\end{bmatrix}\\
&=
\begin{bmatrix}
{\bf p}_{j} \\ {\bf q}_{j}
\end{bmatrix}
+
h\,
{\bf J}
{\bf K}^\top_j \sigma\left({\bf K}_j 
\begin{bmatrix}
{\bf p}_{j+1} \\ {\bf q}_{j}
\end{bmatrix}
+{\bf b}_j\right)\,,
\label{eq:implicit_euler}
\end{align}
where $j=0,1,\dots, N-1$.
In general, computing the updates $(\mathbf{p}_{j+1},\mathbf{q}_{j+1})$ as per \eqref{eq:implicit_euler} involves solving a system of implicit equations.
For computational aspects of deep learning with implicit layers, we refer the reader to~\cite{RN11553, bai2019deep}.
Since implicit equations can be hard to solve, to make the updates \eqref{eq:implicit_euler} easily computable one can further assume that
\begin{equation}\label{eq:implicit_euler_JKb}
{\bf J} \hspace{-0,1cm} = \hspace{-0,1cm} \begin{bmatrix}
0_\frac{n}{2} & -{\bf X}^\top\\
{\bf X} & 0_\frac{n}{2} 
\end{bmatrix},\,
{\bf K}_j = \begin{bmatrix}
{\bf K}_{p,j} & 0_\frac{n}{2} \\
0_\frac{n}{2} & {\bf K}_{q,j}
\end{bmatrix},\,
{\bf b}_j = \begin{bmatrix}
{\bf b}_{p,j} \\
{\bf b}_{q,j}
\end{bmatrix},
\end{equation}
which yields the layer equations 
\begin{align}
	{\bf p}_{j+1} &= {\bf p}_j - h {\bf X}^\top {{\bf K}_{q,j}}^\top \sigma({\bf K}_{q,j}{\bf q}_{j} + {\bf b}_{q,j})\,, \label{eq:p_update}\\
	{\bf q}_{j+1}	&= {\bf q}_j + h {\bf X} {{\bf K}_{p,j}}^\top \sigma({\bf K}_{p,j} {\bf p}_{j+1} + {\bf b}_{p,j} )\,. \label{eq:q_update}
\end{align}
It is easy to see that one can first compute $\mathbf{p}_{j+1}$ through \eqref{eq:p_update}, while  $\mathbf{q}_{j+1}$ is obtained as a function of $\mathbf{p}_{j+1}$ through \eqref{eq:q_update}.\footnote{The layer equations \eqref{eq:p_update}-\eqref{eq:q_update} are analogous to those obtained in \cite{Haber_2017} and \cite{Chang18a} by using Verlet discretization.}

\vspace{0.1cm}

H$_1$-DNNs are motivated by the simplicity of the FE discretization scheme and will be compared in Section~\ref{sec:numerical_experiments} with existing DNNs proposed in \cite{Haber_2017} and \cite{Chang18a} using benchmark examples. However, even if system \eqref{eq:ODE_H} is marginally stable,\footnote{This is always the case for constant parameters ${\bf J}$, ${\bf K}$ and ${\bf b}$, see \cite{vanderSchaft2017}.} FE discretization might introduce instability and lead to layer features $\mathbf{y}_j$ that grow exponentially with the network depth~\cite{ClaraL4DC}.
Instead, H$_2$-DNNs do not suffer from this problem because, as shown in \cite{Hairer2006book}, S-IE discretization preserves the marginal stability of \eqref{eq:ODE_H}.  More importantly, as we show  in Section~\ref{sec:DT_analysis},  H$_2$-DNN architectures completely prevent the phenomenon of vanishing gradients.

We  conclude this subsection by highlighting that H$_1$- and H$_2$-DNNs are more general than the DNN architectures proposed in \cite{Haber_2017, Chang19, Chang18a}. A precise comparison is provided in Appendix~\ref{ap:relation_existing_arq}. 

\subsection{Training of H-DNNs}\label{sec:cost}
Similarly to \cite{Haber_2017, Chang19, Chang18a}, we consider multicategory classification tasks based on the training set  $\{({\bf y}_0^k,c^k), k=1,\dots,s\}$, where $s$ denotes the number of examples, ${\bf y}_0^k$ are the feature vectors, and $c^k\in\{1,\ldots, n_c\}$ are the corresponding labels.
As standard in classification through DNNs, the architectures~\eqref{eq:H-DNN_fE} and~\eqref{eq:implicit_euler} are complemented with an output layer ${\bf y}_{N+1} = {\bf f}_N({\bf y}_{N}, {\boldsymbol{\theta}}_{N})$ composed, e.g., by the softmax function, to re-scale elements of ${\bf y}_{N}$ for representing class membership probabilities~\cite{GoodBengCour2016}.
H-DNNs are trained by solving the following empirical risk minimization problem
\begin{align}
\min_{\boldsymbol{\theta}} & \qquad  \frac{1}{s} \sum_{k=1}^s \mathcal{L}({\bf f}_N({\bf y}^k_{N},\bm{\theta}_N), c^k) + R(\boldsymbol{\theta}) \label{eq:minimization}\\
\text{s.t.} & \text{ \eqref{eq:H-DNN_fE} or \eqref{eq:implicit_euler}}, \quad j=0,1,\dots,N-1\,,\nonumber
\end{align}
where $\boldsymbol{\theta}$ denotes trainable parameters, i.e., 
$\boldsymbol{\theta} = \boldsymbol{\theta}_{0,\dots,N}$ with 
$\boldsymbol{\theta}_{j} = \{{\bf J}_{j}, {\bf K}_{j}, {\bf b}_{j}\}$ for $j=0,\dots,N-1$, and $R(\boldsymbol{\theta})$ is a regularization term given by
$R(\boldsymbol{\theta}) = \alpha \, R_K({\bf K}_{0,\dots,N-1}, {\bf b}_{0,\dots,N-1}) + \alpha_{\ell} R_{\ell}(\boldsymbol{\theta}_{0,\dots,N-1}) + \alpha_N R_N(\boldsymbol{\theta}_N)$. 
The term $R_K$ is defined as  $\frac{h}{2}\sum_{j=1}^{N-1} \left( \left\| {\bf K}_j-{\bf K}_{j-1} \right\|^2_F + \left\| {\bf b}_j-{\bf b}_{j-1} \right\|^2\right)$, which follows the work in \cite{Haber_2017} and \cite{Chang18a}, and favours smooth weight variations across consecutive layers.
The terms $R_{\ell}(\cdot)$ and $R_N(\cdot)$ refer to a standard $L_2$ regularization for the inner layers and the output layer.
The coefficients $\alpha \geq 0$, $\alpha_{\ell} \geq 0$ and $\alpha_N \geq 0$\footnote{$\alpha_\ell$ and $\alpha_N$ are usually called \textit{weight decays}.} are hyperparameters representing the trade-off between fitting and regularization \cite{Haber_2017}.

To minimize the cost \eqref{eq:minimization}, it is common to utilize gradient descent, which can steer the parameters to a stationary point $\bm{\theta}^\star$  of the cost such that
\begin{equation}
\label{eq:stationary_points}
\nabla_{\bm{\theta}}(\mathcal{L}(\bm{\theta}^\star)+R(\bm{\theta}^\star)) = 0\,.
\end{equation}

When using gradient descent to minimize
\eqref{eq:minimization},  at each iteration the gradient of the loss function $\mathcal{L}$ with respect to the parameters needs to be calculated. 
This gradient, for parameter $i$ of layer $j$ is computed according to the chain rule as:
\begin{align}
\frac{\partial \mathcal{L}}{\partial \theta_{i,j}} = \frac{\partial {\bf y}_{j+1}}{\partial \theta_{i,j}} \frac{\partial \mathcal{L}}{\partial {\bf y}_{j+1}}  =  \frac{\partial {\bf y}_{j+1}}{\partial \theta_{i,j}} 
\left( \prod_{l=j+1}^{N-1} \frac{\partial {\bf y}_{l+1}}{\partial {\bf y}_l} \right)
\frac{\partial \mathcal{L}}{\partial {\bf y}_{N}}\,.\label{eq:parameterGradient}
\end{align}

It is clear from \eqref{eq:parameterGradient} that the gradient of the loss with respect to the parameters depends directly on the quantities
\begin{equation}
\boldsymbol{\delta}_j = \frac{\partial \mathcal{L}}{\partial {\bf y}_j}\quad j=1,2,\dots,N \,,
\end{equation}
which can be computed iteratively as follows.

\noindent\textbf{H$_1$-DNN \eqref{eq:H-DNN_fE}:} 
\begin{equation}
\label{eq:FE_BP}
\boldsymbol{\delta}_{j} =  (I_n + h{\bf K}_{j}^\top \text{diag}(\sigma'({\bf K}_{j} {\bf y}_{j} + {\bf b}_{j})) {\bf K}_{j} {\bf J}_{j}^\top)  \,\boldsymbol{\delta}_{j+1},
\end{equation}

\noindent\textbf{H$_2$-DNN \eqref{eq:p_update}-\eqref{eq:q_update}:} 
let us
define  $\boldsymbol{\gamma}_j = \frac{\partial \mathcal{L}}{\partial {\bf p}_j}$, $\boldsymbol{\lambda}_j = \frac{\partial \mathcal{L}}{\partial {\bf q}_j}$, i.e.,  $\boldsymbol{\delta}_j=(\boldsymbol{\gamma}_j, \boldsymbol{\lambda}_j) $.
One has
\begin{align}
    \boldsymbol{\gamma}_{j} =& \boldsymbol{\gamma}_{j+1} + h {{\bf K}}_{p,j}^\top \text{diag}(\sigma'({\bf K}_{p,j} {\bf p}_{j+1} + {\bf b}_{p,j})) {\bf K}_{p,j} {\bf X}^\top \,\, \boldsymbol{\lambda}_{j+1} \nonumber\\
    \boldsymbol{\lambda}_{j} =& \boldsymbol{\lambda}_{j+1} - h {{\bf K}^\top_{q,j}} \text{diag}(\sigma'({\bf K}_{q,j}{\bf q}_j + {\bf b}_{q,j})) {\bf K}_{q,j}{\bf X} \,\, \boldsymbol{\gamma}_{j} \label{eq:backward_implicit_euler}
\end{align}
for $j=N-1,\dots,1$.

Throughout the paper, we will refer to the matrix 
\begin{equation}
\label{eq:BSM}
\frac{\partial \mathbf{y}_N}{\partial \mathbf{y}_{N-j}} = \prod_{l=N-j}^{N-1} \frac{\partial {\bf y}_{l+1}}{\partial {\bf y}_l}\,,
\end{equation} 
as the BSM at layer $N-j$, 
for $j=1,\dots,N-1$,\footnote{For $j=0$, the BSM is $\frac{\partial \mathbf{y}_N}{\partial \mathbf{y}_{N}} = I_n$.}
which, according to \eqref{eq:parameterGradient}, 
allows one to compute the partial derivatives $\frac{\partial \mathcal{L}}{\partial \theta_{i,N-j-1}}$ 
at every layer $N-j-1$. 
As shown in the next subsection, this quantity is the key to studying the phenomena of vanishing and exploding gradients.

\subsection{Vanishing/exploding gradients}\label{sec:VanExpGradient}

Gradient descent methods for solving \eqref{eq:minimization} update the vector ${\boldsymbol{\theta}}$ as
\begin{equation}\label{eq:GD_update}
\boldsymbol{\theta}^{(k+1)} = \boldsymbol{\theta}^{(k)} - \gamma \cdot \nabla_{\boldsymbol{\theta}^{(k)}}
\mathcal{L}\,,
\end{equation}
where $k$ is the iteration number, $\gamma >0$ is the optimization step size
and the elements of $\nabla_{\boldsymbol{\theta}}\mathcal{L}$ are given in~\eqref{eq:parameterGradient}.

The problem of vanishing/exploding gradients is  related to the BSM \eqref{eq:BSM}. Indeed, when $\|\frac{\partial \mathbf{y}_N}{\partial \mathbf{y}_{N-j}}\|$ is very small, from \eqref{eq:parameterGradient} the gradients $\frac{\partial \mathcal{L}}{\partial \theta_{i,N-j-1}}$ vanish despite not having reached a stationary point, and the training stops  prematurely.

Vice-versa, if $\|\frac{\partial \mathbf{y}_N}{\partial \mathbf{y}_{N-j}}\|$ is very large, the derivative $\frac{\partial \mathcal{L}}{\partial \theta_{i,N-j-1}}$ becomes very sensitive to perturbations in the vectors $\frac{\partial {\bf y}_{N-j}}{\partial \theta_{i,N-j-1}}$ and $\frac{\partial \mathcal{L}}{\partial {\bf y}_N}$, and this can make the learning process unstable or cause overflow issues.
Both problems are generally exacerbated when the number of layers $N$ is large~\cite{GoodBengCour2016}.

In Sections \ref{sec:CTAnalysis} and \ref{sec:DT_analysis} we analyze in detail the properties of BSMs,
with the goal of showing that vanishing gradients cannot occur while exploding gradients can be mitigated. 
To this purpose, it is convenient to first adopt the continuous-time perspective enabled by system \eqref{eq:TV_HS}. 

\section{Continuous-Time Analysis}\label{sec:CTAnalysis}

In this section, we analyze the properties of H-DNNs from a continuous-time point of view. 
First, by using backward sensitivity analysis, we derive a continuous-time representation of the backpropagation algorithm. Specifically, the continuous-time counterpart of the BSM \eqref{eq:BSM} is characterized as the solution to a Linear Time-Varying (LTV) ODE. 
Second, we prove that continuous-time BSMs are lower-bounded in norm by the value $1$, independent of the network depth and of the choice for the time-varying weights $\mathbf{K}(t)$ and $\mathbf{b}(t)$. 
Third, we observe that, contrary to what has been conjectured in previous work \cite{Chang19}, the gradients of general Hamiltonian networks  may explode with the network depth, even if the weights are the same in all layers. 
This phenomenon is shown by providing an explicit example.
To mitigate this issue, we derive an informative upper-bound on the norm of the continuous-time BSMs that holds for the general H-DNN architecture. 
This bound suggests utilizing a regularizer on the norms for the weights during training. 
Last, motivated by large-scale applications, we conclude the section by showcasing how all relevant properties of H-DNNs are naturally ported to a distributed learning setup, by appropriately constraining the sparsities for the network weights.

\subsection{Continuous-time backward sensitivity analysis}
As discussed in Section~\ref{sec:VanExpGradient}, the phenomena of vanishing and exploding gradients are tightly linked to the behavior of the BSM \eqref{eq:BSM}.
The continuous-time counterpart of \eqref{eq:BSM} that we study in this section is given by
\begin{equation}
    \label{eq:norm_grad_CT}
\frac{\partial {\bf y}(T)}{\partial {\bf y}(T-t)} \,.
\end{equation}
The following Lemma, whose proof can be found in Appendix~\ref{sec:lem:grad_dynamics}, expresses the backward sensitivity dynamics of H-DNNs as the solution to an LTV ODE.

\begin{lemma}\label{lem:grad_dynamics}
    Given the ODE \eqref{eq:ODE_H} associated with an H-DNN, the {continuous-time} backward sensitivity matrix $\frac{\partial {\bf y}(T)}{\partial {\bf y}(T-t)}$ verifies 
	\begin{equation}
	\frac{d}{d t} \frac{\partial {\bf y}(T)}{\partial {\bf y}(T-t)} = {\bf A}(T-t) \frac{\partial {\bf y}(T)}{\partial {\bf y}(T-t)}\,,
	\label{eq:BackwardGradientDynamics}
	\end{equation}
	where $t \in [0,T]$ and ${\bf A}(\tau) = {\bf K}^\top(\tau) {\bf D}({\bf y}(\tau), \tau) {\bf K}(\tau){\bf J}^\top(\tau)$,
	with ${\bf D}({\bf y}(\tau),\tau) = \text{diag}\left(\sigma'({\bf K}(\tau) {\bf y}(\tau) +{\bf b}(\tau))\right)$.
\end{lemma}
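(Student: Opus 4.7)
The plan is to derive \eqref{eq:BackwardGradientDynamics} in two steps: first identify the Jacobian of the right-hand side of \eqref{eq:ODE_H}, and then propagate it into the evolution equation for the BSM using the semigroup property of the flow.

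For the first step, let $\mathbf{F}(\mathbf{y}, t) := \mathbf{J}(t)\mathbf{K}^\top(t)\sigma(\mathbf{K}(t)\mathbf{y} + \mathbf{b}(t))$ so that \eqref{eq:ODE_H} reads $\dot{\mathbf{y}} = \mathbf{F}(\mathbf{y}, t)$. I would apply the chain rule through the composition $\mathbf{y} \mapsto \mathbf{K}(t)\mathbf{y} + \mathbf{b}(t) \mapsto \sigma(\cdot) \mapsto \mathbf{J}(t)\mathbf{K}^\top(t)(\cdot)$. Using the denominator-layout rule $\partial(\mathbf{M}\mathbf{x})/\partial\mathbf{x} = \mathbf{M}^\top$ and noting that differentiating an elementwise $\sigma$ produces the diagonal matrix $\mathbf{D}(\mathbf{y},t)$ (with the prescribed sub-derivative wherever $\sigma$ fails to be differentiable), a short calculation gives $\partial\mathbf{F}/\partial\mathbf{y} = \mathbf{K}^\top(t)\mathbf{D}(\mathbf{y}(t),t)\mathbf{K}(t)\mathbf{J}^\top(t)$, which is precisely $\mathbf{A}(t)$.

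For the second step, I would exploit the semigroup property of the flow induced by \eqref{eq:ODE_H}. A first-order Taylor expansion around $\tau = T-t$ yields $\mathbf{y}(\tau + \Delta t) = \mathbf{y}(\tau) + \Delta t\, \mathbf{F}(\mathbf{y}(\tau), \tau) + O(\Delta t^2)$, so in denominator layout $\partial\mathbf{y}(\tau+\Delta t)/\partial\mathbf{y}(\tau) = I_n + \Delta t\, \mathbf{A}(\tau) + O(\Delta t^2)$. Since $\mathbf{y}(T)$ depends on $\mathbf{y}(\tau)$ only through $\mathbf{y}(\tau+\Delta t)$, the chain rule in denominator layout gives
\begin{equation*}
\frac{\partial \mathbf{y}(T)}{\partial \mathbf{y}(\tau)} = \bigl(I_n + \Delta t\, \mathbf{A}(\tau) + O(\Delta t^2)\bigr)\frac{\partial \mathbf{y}(T)}{\partial \mathbf{y}(\tau+\Delta t)}\,.
\end{equation*}
Setting $\tau = T-t$ identifies $\tau + \Delta t$ with $T - (t - \Delta t)$, so rearranging, dividing by $\Delta t$ and letting $\Delta t \to 0^+$ recovers the backward difference quotient of $\partial\mathbf{y}(T)/\partial\mathbf{y}(T-t)$ in $t$, which delivers \eqref{eq:BackwardGradientDynamics}.

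The main obstacle I anticipate is purely bookkeeping: getting transpositions right under denominator layout and consistently tracking the reparameterization $t \leftrightarrow T-t$ between the forward flow and the BSM. Once those conventions are fixed, both steps reduce to short direct calculations, and no new ideas beyond the standard sensitivity ODE are required. An equivalent route would be to invoke the known LTV state-transition matrix formulation from linear systems theory and transpose to denominator layout, but the direct Taylor-expansion argument above is self-contained and nicely mirrors the discrete backpropagation recursion \eqref{eq:FE_BP}.
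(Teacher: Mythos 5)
Your proposal is correct and follows essentially the same route as the paper's proof: compute the denominator-layout Jacobian $\partial\mathbf{F}/\partial\mathbf{y} = \mathbf{K}^\top\mathbf{D}\mathbf{K}\mathbf{J}^\top$ of the vector field, and obtain the backward LTV ODE by expanding the flow over a short interval and passing to the limit (the paper phrases this via the integral form of the solution evaluated at $T$ and $T-t-\delta$, which is the same first-order argument you make explicitly with the Taylor expansion). Your version is merely more self-contained, since the paper delegates that limiting step to a lemma in an earlier reference.
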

      
\vspace{0.2cm}
      
Accordingly, the continuous-time counterpart of \eqref{eq:parameterGradient}  is given by
\begin{equation*}
    \frac{\partial \mathcal{L}}{ \partial \theta_{i}(T-t)} = \frac{\partial \mathbf{y}(T-t)}{\partial \theta_{i}(T-t)}\bm{\delta}(T-t)\,,
\end{equation*}
where $\bm{\delta}(T-t) = \frac{\partial \mathcal{L}}{\partial \mathbf{y}(T-t)}$ is the solution to the backward-in-time ODE
\begin{multline}
    \label{eq:delta_CT}
    \dot{\bm{\delta}}(T-t) = {\bf K}^\top(T-t) {\bf D}({\bf y}(T-t), T-t) {\bf K}(T-t) \cdot \\ \cdot {\bf J}^\top(T-t) \bm{\delta}(T-t)\,,
\end{multline}
initialized with $\bm{\delta}(T) = \frac{\partial \mathcal{L}}{\partial\mathbf{y}(T)}$.

\vspace{0.1cm}

By Lemma~\ref{lem:grad_dynamics} the phenomena of vanishing and exploding gradients are avoided if the LTV system \eqref{eq:BackwardGradientDynamics} is {marginally} stable, i.e. its solutions are neither diverging nor asymptotically converging to zero.
 
When $\bm{\theta}(t) = \bm{\theta}$ for every $t \in [0,T]$, the matrix ${\bf A}(T-t)$ has all eigenvalues on the imaginary axis~\cite{Chang19}. 
Our work \cite{ClaraL4DC} has further shown that  ${\bf A}(T-t)$ is diagonalizable. While \cite{Chang19} suggests that these spectral properties of $\mathbf{A}(T-t)$ may lead to non-vanishing and non-exploding gradients when  $\mathbf{y}(t)$ varies slowly enough, to the best of the authors' knowledge, there is no direct link between how fast ${\bf A}(T-t)$ varies and the stability of $\frac{\partial {\bf y}(T)}{\partial {\bf y}(T-t)}$.
We will show this fact through an example later in this section.

Indeed, as opposed to linear time-invariant (LTI) systems, the stability of \eqref{eq:BackwardGradientDynamics} cannot be determined solely based on the eigenvalues of the time-varying  matrix  ${\bf A}(T-t)$~\cite{wu1974note}.  
As pointed out in \cite{Chang19}, a rigorous analysis of the properties of ${\bf A}(T-t)$ can be conducted by using the notion of \emph{kinematic eigenvalues}~\cite{van2004characteristic,BookAscher95}. 
These are determined by finding a time-varying transformation that diagonalizes ${\bf A}(T-t)$. However, such transformation depends explicitly on the LTV ODE solution $\frac{\partial {\bf y}(T)}{\partial {\bf y}(T-t)}$, which makes the computation of kinematic eigenvalues as hard as solving \eqref{eq:BackwardGradientDynamics}.

Motivated as above, rather than studying the properties of ${\bf A}(T-t)$, in this paper we analyze the properties of $\frac{\partial {\bf y}(T)}{\partial {\bf y}(T-t)}$ directly.

\subsection{Non-vanishing BSM}\label{sec:symplecticity_CT}
Our first main result is to establish that, under the assumption that $\mathbf{J}(t) = \mathbf{J}$ is constant for all $t$,  $\frac{\partial {\bf y}(T)}{\partial {\bf y}(T-t)}$ is  a \emph{symplectic matrix} with respect to $\mathbf{J}$. 
\begin{definition}[Symplectic matrix]
\label{def:symplectic}
Let $\mathbf{Q} \in \mathbb{R}^{n \times n}$ be a skew-symmetric matrix, i.e. $\mathbf{Q}+\mathbf{Q}^\top = 0_{n}$. A matrix $\mathbf{M}$ is symplectic with respect to $\mathbf{Q}$ if
\begin{equation*}
    \mathbf{M}^\top \mathbf{Q} \mathbf{M} = \mathbf{Q}\,.
\end{equation*}
\end{definition}
Symplectic matrices are usually defined by assuming 
that 
$\mathbf{Q} = \begin{bsmallmatrix}0 & I \\-I &0 \end{bsmallmatrix} \in \mathbb{R}^{n \times n}$ 
with $n$ being an even integer~\cite{Hairer2006book}.
In this respect, Definition~\ref{def:symplectic} provides a slightly generalized notion of symplecticity.
\begin{lemma}
\label{le:generalized_symplecticity}
    Consider an H-DNN as per \eqref{eq:ODE_H} with $\mathbf{J}( t) = \mathbf{J}$ for all $t \in [0,T]$, where $\mathbf{J}$ is any skew-symmetric matrix. Then
    $\frac{\partial {\bf y}(T)}{\partial {\bf y}(T-t)}$ is symplectic with respect to $\mathbf{J}$, i.e.
    \begin{equation}
    \label{eq:generalized_symplecticity}
        \left(\frac{\partial {\bf y}(T)}{\partial {\bf y}(T-t)}\right)^\top \mathbf{J}\frac{\partial {\bf y}(T)}{\partial {\bf y}(T-t)} = \mathbf{J}\,,
    \end{equation}
        for all $t \in [0,T]$.
\end{lemma}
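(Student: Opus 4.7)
The plan is to show symplecticity by exhibiting a conserved quantity along the backward sensitivity dynamics of Lemma~\ref{lem:grad_dynamics}. Let me define the matrix-valued function $\Phi(t) := \frac{\partial \mathbf{y}(T)}{\partial \mathbf{y}(T-t)}$ and $\Psi(t) := \Phi(t)^\top \mathbf{J} \Phi(t)$. Since $\mathbf{y}(T-t)\big|_{t=0} = \mathbf{y}(T)$, the initial condition is $\Phi(0) = I_n$, which gives $\Psi(0) = \mathbf{J}$. The goal becomes to show that $\Psi(t)$ is constant in $t$, i.e., $\dot{\Psi}(t) = 0$ for all $t \in [0,T]$.

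First I would exploit the structure of $\mathbf{A}(T-t)$ from Lemma~\ref{lem:grad_dynamics}. Writing $\mathbf{S}(t) := \mathbf{K}^\top(T-t)\,\mathbf{D}(\mathbf{y}(T-t),T-t)\,\mathbf{K}(T-t)$, we have $\mathbf{S}(t)^\top = \mathbf{S}(t)$ because $\mathbf{D}$ is diagonal, and $\mathbf{A}(T-t) = \mathbf{S}(t)\mathbf{J}^\top = -\mathbf{S}(t)\mathbf{J}$, where I used the assumption $\mathbf{J}(t) = \mathbf{J}$ constant and skew-symmetric.

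Next I would differentiate $\Psi(t)$ using \eqref{eq:BackwardGradientDynamics}:
\begin{align*}
\dot{\Psi}(t) &= \dot{\Phi}(t)^\top \mathbf{J}\, \Phi(t) + \Phi(t)^\top \mathbf{J}\, \dot{\Phi}(t) \\
&= \Phi(t)^\top \bigl[\mathbf{A}(T-t)^\top \mathbf{J} + \mathbf{J}\, \mathbf{A}(T-t)\bigr]\Phi(t).
\end{align*}
The bracketed expression becomes $(\mathbf{S}(t)\mathbf{J}^\top)^\top \mathbf{J} + \mathbf{J}\,\mathbf{S}(t)\mathbf{J}^\top = \mathbf{J}\,\mathbf{S}(t)\,\mathbf{J} - \mathbf{J}\,\mathbf{S}(t)\,\mathbf{J} = 0$, where I used $\mathbf{S}(t)^\top = \mathbf{S}(t)$ and $\mathbf{J}^\top = -\mathbf{J}$. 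Hence $\dot{\Psi}(t) \equiv 0$ and, combined with $\Psi(0) = \mathbf{J}$, this yields \eqref{eq:generalized_symplecticity} for all $t \in [0,T]$.

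The argument is essentially a direct calculation once the right conserved quantity is identified, so there is no serious obstacle; the only subtlety is to carefully exploit both structural assumptions (symmetry of $\mathbf{S}(t)$ and constant skew-symmetry of $\mathbf{J}$) to obtain the cancellation. I would also remark that the constancy of $\mathbf{J}$ across layers is essential: if $\mathbf{J}$ depended on $t$, the term $\Phi(t)^\top \dot{\mathbf{J}}(T-t)\Phi(t)$ would appear in $\dot{\Psi}(t)$ and generally spoil symplecticity.
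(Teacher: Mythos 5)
Your proof is correct and follows essentially the same route as the paper's: both differentiate $\Phi^\top\mathbf{J}\Phi$ along the backward sensitivity ODE of Lemma~\ref{lem:grad_dynamics}, use the symmetry of $\mathbf{K}^\top\mathbf{D}\mathbf{K}$ together with the skew-symmetry and constancy of $\mathbf{J}$ to cancel the two terms, and conclude from the initial condition $\Phi(0)=I_n$. Your closing remark on why constancy of $\mathbf{J}$ is needed is a nice addition but does not change the argument.
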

\begin{proof}
 For brevity, let $\Phi = \frac{\partial {\bf y}(T)}{\partial {\bf y}(T-t)}$, $\tau = T-t$ and  $\mathbf{D}(\tau) = \text{diag}\left(\sigma'(\mathbf{K}(\tau)\mathbf{y}(\tau)+\mathbf{b}(\tau))\right)$.
 We have
 \begin{align*}
     &\frac{d}{dt}\left(\Phi^\top\mathbf{J}\Phi\right) \\
     &= \dot{\Phi}^\top\mathbf{J}\Phi +\Phi^\top\mathbf{J}\dot{\Phi}\\
     &=\Phi^\top\mathbf{J}\mathbf{K}^\mathsf{T}(T\hspace{-0.1cm}-t)\mathbf{D}(T\hspace{-0.1cm}-t)\mathbf{K}(T\hspace{-0.1cm}-t)\mathbf{J}\Phi\\
     &~~+\Phi^\top\mathbf{J}\mathbf{K}^\mathsf{T}(T\hspace{-0.1cm}-t)\mathbf{D}(T\hspace{-0.1cm}-t)\mathbf{K}(T\hspace{-0.1cm}-t)\mathbf{J}^\top\Phi\\
     &= \Phi^\top\mathbf{J}\mathbf{K}^\mathsf{T}(T\hspace{-0.1cm}-t)\mathbf{D}(T\hspace{-0.1cm}-t)\mathbf{K}(T\hspace{-0.1cm}-t)\mathbf{J}\Phi\\
     &~~-\Phi^\top\mathbf{J}\mathbf{K}^\mathsf{T}(T\hspace{-0.1cm}-t)\mathbf{D}(T\hspace{-0.1cm}-t)\mathbf{K}(T\hspace{-0.1cm}-t)\mathbf{J}\Phi
     = 0_{n}\,.
 \end{align*}
  Since $\frac{\partial {\bf y}(T)}{\partial {\bf y}(T)} = I_n$ by definition, then $\left(\frac{\partial {\bf y}(T)}{\partial {\bf y}(T)}\right)^\top\mathbf{J}\frac{\partial {\bf y}(T)}{\partial {\bf y}(T)} = \mathbf{J}$. 
  As the time-derivative of $\frac{\partial {\bf y}(T)}{\partial {\bf y}(T-t)}^\top \mathbf{J} \frac{\partial {\bf y}(T)}{\partial {\bf y}(T-t)}$ is equal to zero for every $t \in [0,T]$, then $\frac{\partial {\bf y}(T)}{\partial {\bf y}(T-t)}^\top \mathbf{J} \frac{\partial {\bf y}(T)}{\partial {\bf y}(T-t)} = \mathbf{J}$ for all $t \in [0,T]$. 
\end{proof}

We highlight that Lemma~\ref{le:generalized_symplecticity} is an adaptation of Poincar{\'e} theorem~\cite{poincare1899methodes,Hairer2006book} to the case of the time-varying Hamiltonian functions \eqref{eq:nlH} and the notion of symplecticity provided in Definition~\ref{def:symplectic}. 

Next, we exploit Lemma~\ref{le:generalized_symplecticity}  
to prove that the norm of $\frac{\partial {\bf y}(T)}{\partial {\bf y}(T-t)}$ cannot vanish for all $t\in[0,T]$.
\begin{theorem}
\label{th:nonvanishing_gradients}
    Consider an H-DNN as per \eqref{eq:ODE_H} with $\mathbf{J}( t) = \mathbf{J}$ for all $t \in [0,T]$, where $\mathbf{J}$ is any non-zero skew-symmetric matrix. Then
    \begin{equation}
    \label{eq:nonvanishing_gradients}
        \norm{\frac{\partial {\bf y}(T)}{\partial {\bf y}(T-t)}} \geq 1\,,
    \end{equation}
     for all $t \in [0,T]$, where $\norm{\cdot}$ denotes any sub-multiplicative norm.
\end{theorem}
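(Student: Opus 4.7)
The plan is to combine the symplecticity result of Lemma \ref{le:generalized_symplecticity} with the standard inequality $\|\Phi\|\ge \rho(\Phi)$ that holds for \emph{every} sub-multiplicative norm (a direct consequence of Gelfand's formula applied to $\|\Phi^k\|\le\|\Phi\|^k$). Writing $\Phi\triangleq \frac{\partial{\bf y}(T)}{\partial{\bf y}(T-t)}$, it therefore suffices to prove the purely spectral bound $\rho(\Phi)\ge 1$, after which \eqref{eq:nonvanishing_gradients} follows at once.

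To establish $\rho(\Phi)\ge 1$, I first note that $\Phi$ is invertible for every $t\in[0,T]$: it is the state-transition matrix of the LTV system \eqref{eq:BackwardGradientDynamics} initialized at $I_n$, and Liouville's theorem ensures $\det\Phi(t)\ne 0$. Assume temporarily that $\mathbf{J}$ is invertible. Rearranging the identity $\Phi^\top\mathbf{J}\Phi=\mathbf{J}$ of Lemma \ref{le:generalized_symplecticity} yields $\Phi^{-1}=\mathbf{J}^{-1}\Phi^\top\mathbf{J}$, so $\Phi^{-1}$ is similar to $\Phi^\top$ and hence shares the spectrum of $\Phi$. The eigenvalues of $\Phi$ therefore come in reciprocal pairs $(\lambda,\lambda^{-1})$; if $\rho(\Phi)<1$, both $\lambda$ and $\lambda^{-1}$ would lie strictly inside the unit disk, a contradiction. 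Hence $\rho(\Phi)\ge 1$.

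The singular case (including $\mathbf{J}=0$) is handled by a direct dynamical argument. The matrix $\mathbf{A}(T-t)$ of Lemma \ref{lem:grad_dynamics} factors with $\mathbf{J}^\top$ on the right, and $\ker\mathbf{J}=\ker\mathbf{J}^\top$ by skew-symmetry, so every $v\in\ker\mathbf{J}$ satisfies $\mathbf{A}(T-t)v=0$ for all $t$. The constant curve $w(t)\equiv v$ is therefore a solution of $\dot w=\mathbf{A}(T-t)w$ with $w(0)=v$, and uniqueness of ODE solutions yields $\Phi(t)v=v$. Thus $1$ is an eigenvalue of $\Phi$, giving again $\rho(\Phi)\ge 1$; combined with $\|\Phi\|\ge\rho(\Phi)$, this closes the proof.

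The main conceptual hurdle is the request for \emph{any} sub-multiplicative norm, and not only unitarily invariant ones. A tempting direct attempt---take norms in $\mathbf{J}=\Phi^\top\mathbf{J}\Phi$ to obtain $\|\mathbf{J}\|\le\|\Phi^\top\|\,\|\mathbf{J}\|\,\|\Phi\|$---fails because $\|\Phi^\top\|$ need not equal $\|\Phi\|$ for norms such as the induced $1$- or $\infty$-norm, and does not yield $\|\Phi\|\ge 1$ even when the two coincide. Factoring the argument through $\rho(\Phi)$ cleanly decouples the algebraic content of symplecticity from the choice of norm.
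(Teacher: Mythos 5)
Your proof is correct, and it takes a genuinely different route from the paper's. The paper argues in one line: from $\Phi^\top\mathbf{J}\Phi=\mathbf{J}$ it writes $\norm{\mathbf{J}}\le\norm{\Phi}^2\norm{\mathbf{J}}$ and divides by $\norm{\mathbf{J}}$ --- i.e.\ it uses exactly the ``tempting direct attempt'' you dismiss. You are right that this step silently assumes $\norm{\Phi^\top}=\norm{\Phi}$, which fails for, e.g., the induced $1$- and $\infty$-norms, and that it degenerates when $\mathbf{J}=0$; but your claim that it ``does not yield $\norm{\Phi}\ge 1$ even when the two coincide'' is wrong --- when $\norm{\Phi^\top}=\norm{\Phi}$ and $\mathbf{J}\neq 0$, dividing by $\norm{\mathbf{J}}>0$ gives $\norm{\Phi}^2\ge 1$ immediately, which is precisely the paper's argument. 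Your alternative, routing the bound through $\norm{\Phi}\ge\rho(\Phi)$ and proving $\rho(\Phi)\ge 1$ spectrally (reciprocal eigenvalue pairs from $\Phi^{-1}=\mathbf{J}^{-1}\Phi^\top\mathbf{J}$ when $\mathbf{J}$ is invertible; the fixed direction $\Phi v=v$ for $v\in\ker\mathbf{J}$ otherwise), is more work but is watertight for \emph{every} sub-multiplicative norm and every skew-symmetric $\mathbf{J}$, including singular and zero ones. In short: the paper buys brevity at the cost of an implicit restriction on the norm and on $\mathbf{J}$; you buy full generality at the cost of a spectral detour and a case split.
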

\begin{proof}
We know by Lemma~\ref{le:generalized_symplecticity} that \eqref{eq:generalized_symplecticity} holds. Hence, we have
\begin{equation*}
     \|\mathbf{J}\| = \norm{\left(\frac{\partial {\bf y}(T)}{\partial {\bf y}(T-t)}\right)^\top \mathbf{J} \frac{\partial {\bf y}(T)}{\partial {\bf y}(T-t)}} \leq \norm{\frac{\partial {\bf y}(T)}{\partial {\bf y}(T-t)}}^2 \norm{\mathbf{J}}\,,
 \end{equation*}
 for all $t \in [0,T]$. The above inequality implies the result.
\end{proof}


\subsection{Towards non-exploding gradients}\label{sec:non_exploding}

The next question is whether $\norm{\frac{\partial {\bf y}(T)}{\partial {\bf y}(T-t)}}$ may diverge to infinity in general as $T$ increases. It was conjectured in \cite{Chang19} that, when all the weights $\bm{\theta}(t) = \bm{\theta}$ are time-invariant and $\mathbf{y}(t)$ varies slowly enough, the gradients will not explode for arbitrarily deep networks. 
Next, we show through a simple example that, unfortunately, gradients may explode even under such conditions. Nevertheless, we derive an informative upper-bound for mitigating this issue.

\subsubsection{An example of exploding gradients}
We consider the two-dimensional H-DNN whose forward equation is given by
\begin{equation}
\label{eq:2D_Example}
    \dot{\mathbf{y}}(t) = \epsilon\mathbf{J} \tanh\left(\mathbf{y}(t)\right)\,,
\end{equation}
where $\mathbf{J} = \begin{bmatrix}0 &-1\\1&0\end{bmatrix}$ and $\epsilon\in \mathbb{R}$. Clearly, \eqref{eq:2D_Example} is an instance of \eqref{eq:ODE_H} where $\sigma(\cdot) = \tanh(\cdot)$, $\mathbf{K}(t) = I_2$, $\mathbf{b}(t) = 0_{2 \times 1}$ and $\mathbf{J}(t) = \mathbf{J}$. Furthermore, \eqref{eq:2D_Example} is an instance of antisymmetric network from \cite{Chang19} without the input term.
As in \cite{Chang19}, all the weights are time-invariant, and if $|\epsilon|$ is very small, then $\mathbf{y}(t)$ varies arbitrarily slowly. It can be shown that, when the weights are time-invariant, the level of the Hamiltonian remains constant for all $t\in [0,T]$, that is, $H(\mathbf{y}(t)) = H(\mathbf{y}(0))$ \cite{Haber_2017,ClaraL4DC}. Further, we prove here that the solution $\mathbf{y}(t)$ to the ODE \eqref{eq:2D_Example} is periodic and that its period increases or decreases as the Hamiltonian energy $H(\mathbf{y}(0))$ increases or decreases, respectively. The proof is reported in Appendix~\ref{sec:le:periodic}. 

In this subsection, we denote the solution to the ODE \eqref{eq:2D_Example} at time $t$, initialized at time $t_0$ with initial condition $\mathbf{y}(t_0) = \mathbf{y}_0$ as $\mathbf{s}(t,t_0,\mathbf{y}_0)$.
\begin{lemma}
\label{le:periodic}
Consider the ODE \eqref{eq:2D_Example}. The following statements hold.
\begin{itemize}
    \item[$i)$] For any $\mathbf{y}_0 \in \mathbb{R}^2$, there exist a \emph{period} $P\in \mathbb{R}$ such that
    \begin{equation*}
    \mathbf{s}(P+t,0,\mathbf{y}_0) = \mathbf{s}(t,0,\mathbf{y}_0), \quad \forall t \in \mathbb{R}\,.
\end{equation*}
\item[$ii)$] Let $P_{\gamma,\beta}$ denote the period of the ODE \eqref{eq:2D_Example} initialized at $\mathbf{y}(0) = \mathbf{y}_0 + \gamma\bm{\beta}$, where $\gamma>0$ and $\bm{\beta} \in \mathbb{R}^2$. The period increases or decreases as the Hamiltonian energy $H(\mathbf{y}(0))$ increases or decreases, respectively. More precisely
\begin{equation*}
    \text{sign}\left(P_{\gamma,\beta}-P_{0,\beta}\right) = \text{sign}\left(H(\mathbf{y}_0+\gamma \bm{\beta})-H(\mathbf{y}_0)\right)\,.
\end{equation*}
\end{itemize}
\end{lemma}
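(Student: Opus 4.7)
For part $i)$, I would establish periodicity via energy conservation. The Hamiltonian associated with \eqref{eq:2D_Example} is $H(\mathbf{y}) = \log\cosh y_1 + \log\cosh y_2$, and along any trajectory
\begin{equation*}
\dot H = \nabla H(\mathbf{y})^\top \dot{\mathbf{y}} = \epsilon\tanh(\mathbf{y})^\top\mathbf{J}\tanh(\mathbf{y}) = 0
\end{equation*}
by skew-symmetry of $\mathbf{J}$. Since $H$ is nonnegative, strictly convex, coercive, and vanishes only at the origin, the level set $\Gamma_E = \{\mathbf{y}: H(\mathbf{y}) = E\}$ is a smooth simple closed curve encircling the origin for any $E = H(\mathbf{y}_0) > 0$, and the vector field in \eqref{eq:2D_Example} is tangent to and nonvanishing on $\Gamma_E$. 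A standard argument for nonvanishing flows on a compact connected one-dimensional manifold then yields that the orbit traverses $\Gamma_E$ and returns to $\mathbf{y}_0$ in finite time, giving the period $P$. The case $\mathbf{y}_0 = 0$ corresponds to the trivial equilibrium and is periodic with arbitrary period.

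For part $ii)$, uniqueness of solutions combined with conservation of $H$ implies that the period depends only on the energy level: $P_{0,\beta} = P(H(\mathbf{y}_0))$ and $P_{\gamma,\beta} = P(H(\mathbf{y}_0+\gamma\bm{\beta}))$. The claim therefore reduces to showing that the map $E \mapsto P(E)$ is strictly increasing on $(0,\infty)$. My plan is to derive a closed-form integral representation of $P(E)$ from which monotonicity can be read off. Exploiting the symmetry of $\Gamma_E$ under $y_i \mapsto -y_i$, the period equals four times the transit time across a single quadrant. Parametrizing that portion by $y_1 \in [0, y_1^{\max}(E)]$ with $\cosh y_1^{\max}(E) = e^E$, and using the conservation law to express $|\tanh y_2|$ in terms of $y_1$, one obtains
\begin{equation*}
P(E) = \frac{4}{|\epsilon|}\int_0^{y_1^{\max}(E)}\frac{dy_1}{\sqrt{1 - e^{-2E}\cosh^2 y_1}}\,.
\end{equation*}

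I would then apply the substitutions $v = e^{-E}\cosh y_1$ followed by $v = \sqrt{e^{-2E}\cos^2\phi + \sin^2\phi}$ to recast the integral in the compact form
\begin{equation*}
P(E) = \frac{4}{|\epsilon|}\int_0^{\pi/2}\frac{d\phi}{\sqrt{e^{-2E}\cos^2\phi + \sin^2\phi}}\,.
\end{equation*}
Strict monotonicity in $E$ is now transparent: for every $\phi \in (0,\pi/2)$ the denominator of the integrand is a strictly decreasing function of $E$, so $P(E)$ is strictly increasing, and the sign identity in part $ii)$ follows immediately. The main technical obstacle I anticipate is the endpoint analysis when deriving the first integral representation: the $y_1$-parametrization is singular at $y_1 = 0$ and $y_1 = y_1^{\max}(E)$, where the orbit is tangent to the $y_2$-axis, and one must justify that the resulting improper integral converges with no missing boundary contributions. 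The substitution $v = e^{-E}\cosh y_1$ regularizes both singularities into integrable inverse-square-root endpoints, which resolves this issue cleanly.
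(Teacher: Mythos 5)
Your proposal is correct, and for part $ii)$ it takes a genuinely different route from the paper. For part $i)$ the two arguments are close in outcome but differ in execution: the paper passes to polar coordinates, shows $\dot\phi=\epsilon\rho^{-2}\,\mathbf{s}^\top\tanh(\mathbf{s})>0$ so the angle advances by $2\pi$ in finite time, and then argues separately, via strict monotonicity of $\log\cosh$, that the radius returns to its initial value; you instead use conservation of $H$ together with strict convexity and coercivity of $H$ to identify each positive level set as a compact simple closed curve on which the field is tangent and nonvanishing, and conclude by the standard return-time argument. For part $ii)$ the paper compares angular velocities pointwise between nested orbits: it computes $\partial_\gamma\dot\phi$ at the angle $\phi=0$, shows it is negative using $x\sech^2x-\tanh x<0$ for $x>0$, and appeals to ``any orientation of the axes'' to cover all angles --- a step that needs care since the system is not rotation-invariant, although the pointwise inequality does in fact hold at every angle. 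You instead note that the period is a function of the energy alone (this quietly uses that each level set is a single orbit, i.e., your part $i)$) and derive the closed form $P(E)=\tfrac{4}{|\epsilon|}\int_0^{\pi/2}\left(e^{-2E}\cos^2\phi+\sin^2\phi\right)^{-1/2}d\phi$, from which strict monotonicity is immediate. I verified your computation: the quarter-period reduction follows from the reversibility of the system under $(y_1,t)\mapsto(-y_1,-t)$ and $(y_2,t)\mapsto(-y_2,-t)$; the conservation law $\cosh y_1\cosh y_2=e^{E}$ gives $|\tanh y_2|=\sqrt{1-e^{-2E}\cosh^2y_1}$; both substitutions are valid and regularize the endpoint singularities; and the formula passes the sanity checks $P(E)\to 2\pi/|\epsilon|$ as $E\to 0^{+}$ (matching the linearization $\dot{\mathbf{y}}=\epsilon\mathbf{J}\mathbf{y}$) and $P(E)\to\infty$ as $E\to\infty$. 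Your approach buys an explicit, quantitative period function and a fully rigorous, transparent monotonicity proof at the cost of an elliptic-integral computation; the paper's buys elementarity at the cost of a comparison step stated somewhat informally.
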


\vspace{0.1cm}

Lemma~\ref{le:periodic} implies that, for arbitrarily small $\epsilon$ and $\gamma$, the original and perturbed trajectories of \eqref{eq:2D_Example}  are not synchronized in phase.
 Let us now reason backwards in time. As $T-t$ decreases from $T$ to $0$, assuming without loss of generality that $\bm{\beta}$ points towards a larger Hamiltonian sublevel set $H(\mathbf{y}_0+\gamma \bm{\beta})>H(\mathbf{y}_0)$,  the value 
\begin{equation*}
\|\mathbf{s}(T,T-t,\mathbf{y}_0+\gamma \bm{\beta})-\mathbf{s}(T,T-t,\mathbf{y}_0)\|\,,
\end{equation*}
increases, until reaching, for a long enough time, a value of at least
$$D=\text{diam}(\{\mathbf{y}|~ H(\mathbf{y}) = H(\mathbf{y}_0)\}),$$
where the $\text{diam}(\cdot)$ for the sublevel set $\{\mathbf{y}|H(\mathbf{y})=H(\mathbf{y}_0)\}$ is defined as
\begin{align*}
  \text{diam}(\cdot) =& \max_{\mathbf{x}_1, \mathbf{x}_2} \|\mathbf{x}_1-\mathbf{x}_2\| \\
  &  \text{s.t.} \quad H(\mathbf{x}_1)=H(\mathbf{x}_2)=H(\mathbf{y}_0).
\end{align*}
We can connect these observations to the norm of the continuous-time BSM \eqref{eq:norm_grad_CT} as follows.  For any $\bm{\beta} \in \mathbb{R}^2$ that points towards a higher sublevel set of the Hamiltonian and  for any $\gamma>0$, there exist $T$ and $\tau$ such that
\begin{align}
   &\norm{ \frac{\partial \mathbf{y}(T)}{\partial \mathbf{y}(T-\tau)} \bm{\beta}} =\nonumber\\
   &=\norm{\lim_{\gamma\rightarrow 0^+} \frac{\mathbf{s}(T,T-\tau,\mathbf{y}_0+\gamma \bm{\beta})-\mathbf{s}(T,T-\tau,\mathbf{y}_0)}{\gamma}}\nonumber\\
   &=\lim_{\gamma\rightarrow 0^+} \frac{\norm{\mathbf{s}(T,T-\tau,\mathbf{y}_0+\gamma \bm{\beta})-\mathbf{s}(T,T-\tau,\mathbf{y}_0)}}{\gamma}\label{eq:value_validation_numerical}\\
   &= \lim_{\gamma\rightarrow 0^+}\frac{D}{\gamma} \nonumber\,.
\end{align}
Since the reached value $D$ is \emph{independent} of how small $\gamma$ is, we deduce that $\norm{ \frac{\partial \mathbf{y}(T)}{\partial \mathbf{y}(T-\tau)} \bm{\beta}}$ may diverge to infinity. In Appendix~\ref{sec:num_validation}, we confirm the above argument through numerical simulation of the considered H-DNN. Specifically, for any fixed value of $\gamma>0$, $\norm{ \frac{\partial \mathbf{y}(T)}{\partial \mathbf{y}(T-t)}}$ reaches a maximum value of approximately $\frac{D}{\gamma}$, which tends to infinity as $T$ increases and $\gamma$ approaches $0$.

Last, notice that the provided example is valid for arbitrarily small $\epsilon>0$. Hence, even if $\mathbf{y}(t)$ varies arbitrarily slowly, exploding gradients can still occur. Furthermore, the growth of the gradients is independent of whether the weights are chosen in a time-invariant or time-varying fashion. 

\subsubsection{Upper-bounds for general H-DNNs}

In the sequel, we consider time-varying weights
$\bm{\theta}(t)$  and general H-DNNs along with their continuous-time ODEs. As showcased, we expect exploding gradients as the network depth increases to infinity. Despite this fact, it remains important to derive upper-bounds on the continuous-time BSM for a fixed network depth. Characterizing how gradients grow across layers can provide network design guidelines to keep them under control. We present the following result whose proof can be found in Appendix~\ref{sec:lem:upperbound}. 
\begin{proposition}\label{lem:upper_bound}
	Consider a general H-DNN as per \eqref{eq:ODE_H} with depth $T \in \mathbb{R}$. Then,
	\begin{equation}
	\label{eq:exponential_bound}
	\norm{\frac{\partial \mathbf{y}(T)}{\partial \mathbf{y}(T-t)}}_2 \leq \sqrt{n} \exp(QT) \,, \quad \forall t \in [0,T]\,,
	\end{equation}
	where
	$Q = S\sqrt{n}\,\max_{t \in [0,T]}\norm{\mathbf{K}(t)}^2_2 \norm{\mathbf{J}(t)}_2 $ and $S$ satisfies \eqref{eq:sigma_bound}.
\end{proposition}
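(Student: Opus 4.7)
The starting point is Lemma~\ref{lem:grad_dynamics}, which shows that $\Phi(t) \triangleq \frac{\partial \mathbf{y}(T)}{\partial \mathbf{y}(T-t)}$ satisfies the matrix LTV ODE $\dot{\Phi}(t) = \mathbf{A}(T-t)\Phi(t)$ with $\Phi(0) = I_n$, where $\mathbf{A}(T-t) = \mathbf{K}^\top(T-t)\mathbf{D}(\mathbf{y}(T-t),T-t)\mathbf{K}(T-t)\mathbf{J}^\top(T-t)$. The plan is to apply a Gr\"onwall-type argument to the Frobenius norm of $\Phi(t)$ and then exploit the inequality $\|\cdot\|_2 \leq \|\cdot\|_F$ at the end. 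Working with the Frobenius norm is what naturally produces the two $\sqrt{n}$ factors appearing in the statement: one from $\|\Phi(0)\|_F = \|I_n\|_F = \sqrt{n}$ and the other from bounding $\|\mathbf{D}\|_F$.

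First I would differentiate $\|\Phi(t)\|_F$ (or, more smoothly, $\|\Phi(t)\|_F^2$) and use submultiplicativity of the Frobenius norm, $\|AB\|_F \leq \|A\|_F\|B\|_F$, to get
\begin{equation*}
\frac{d}{dt}\|\Phi(t)\|_F \leq \|\dot{\Phi}(t)\|_F = \|\mathbf{A}(T-t)\Phi(t)\|_F \leq \|\mathbf{A}(T-t)\|_F\,\|\Phi(t)\|_F\,.
\end{equation*}
Gr\"onwall's lemma then yields $\|\Phi(t)\|_F \leq \sqrt{n}\exp\bigl(\int_0^t \|\mathbf{A}(T-s)\|_F\,ds\bigr)$.

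Next I would bound $\|\mathbf{A}(T-t)\|_F$ uniformly in $t$. Using submultiplicativity once more, together with $\|\mathbf{K}^\top \mathbf{D}\mathbf{K}\mathbf{J}^\top\|_F \leq \|\mathbf{K}\|_2^2\,\|\mathbf{J}\|_2\,\|\mathbf{D}\|_F$, and the fact that $\mathbf{D}$ is diagonal with entries of magnitude bounded by $S$ (cf.~\eqref{eq:sigma_bound}), one obtains $\|\mathbf{D}\|_F \leq S\sqrt{n}$ and therefore
\begin{equation*}
\|\mathbf{A}(T-t)\|_F \leq S\sqrt{n}\,\|\mathbf{K}(T-t)\|_2^2\,\|\mathbf{J}(T-t)\|_2 \leq Q\,,
\end{equation*}
with $Q$ as in the proposition. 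Substituting back gives $\|\Phi(t)\|_F \leq \sqrt{n}\exp(Qt) \leq \sqrt{n}\exp(QT)$, and since $\|\Phi(t)\|_2 \leq \|\Phi(t)\|_F$ the claimed inequality~\eqref{eq:exponential_bound} follows.

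The argument is entirely routine once one commits to the Frobenius norm; the only mildly delicate point is choosing the right norm pairings in the Gr\"onwall step so that the constant $Q$ is expressed using the spectral norms of $\mathbf{K}$ and $\mathbf{J}$ (rather than their Frobenius norms, which would yield a looser bound). No obstacle beyond this bookkeeping is anticipated.
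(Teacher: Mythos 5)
Your proof is correct, but it takes a somewhat different route from the paper's. The paper decomposes $\frac{\partial \mathbf{y}(T)}{\partial \mathbf{y}(T-t)}$ into its columns $\mathbf{z}_i(t)$, each solving $\dot{\mathbf{z}}_i = \mathbf{A}(T-t)\mathbf{z}_i$ with $\mathbf{z}_i(0)=e_i$, applies a scalar Gr\"onwall argument to each $\|\mathbf{z}_i(t)\|_2$ to get $\|\mathbf{z}_i(t)\|_2\leq \exp(QT)$, and then invokes an auxiliary lemma (columns bounded by $\gamma^+$ implies $\|\cdot\|_2\leq \gamma^+\sqrt{n}$) twice: once to bound $\|\mathbf{D}\|_2\leq S\sqrt{n}$ inside $Q$, and once to assemble the column bounds into the final spectral-norm bound, which is where its $\sqrt{n}$ prefactor comes from. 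You instead run a single matrix-valued Gr\"onwall estimate in the Frobenius norm, obtaining the prefactor from $\|I_n\|_F=\sqrt{n}$ and the factor $S\sqrt{n}$ in $Q$ from $\|\mathbf{D}\|_F\leq S\sqrt{n}$, and finish with $\|\cdot\|_2\leq\|\cdot\|_F$. The two arguments are morally equivalent (the paper's column lemma is essentially $\|\cdot\|_2\leq\|\cdot\|_F$ in disguise), and both land on exactly the constant $Q$ in the statement; your version is arguably cleaner since it avoids the auxiliary lemma, while the paper's column-wise formulation makes the initial condition $\|\mathbf{z}_i(0)\|_2=1$ explicit. One small point to be careful about: the inequality $\|\mathbf{K}^\top\mathbf{D}\mathbf{K}\mathbf{J}^\top\|_F\leq\|\mathbf{K}\|_2^2\|\mathbf{J}\|_2\|\mathbf{D}\|_F$ requires the mixed bounds $\|AB\|_F\leq\|A\|_2\|B\|_F$ and $\|AB\|_F\leq\|A\|_F\|B\|_2$ rather than plain Frobenius submultiplicativity, but these are standard and your pairing of norms is exactly right.
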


Proposition \ref{lem:upper_bound} quantifies the phenomenon of exploding gradients that we have observed in the example \eqref{eq:2D_Example}. 
Specifically, for a general H-DNN in continuous-time with a fixed depth $T \in \mathbb{R}$,  \eqref{eq:exponential_bound} reveals that the term $\norm{\mathbf{K}(t)}^2_2 \norm{\mathbf{J}(t)}_2$ is crucial in keeping the gradients under control. 
The same is expected for discrete-time implementations.

This fact leads to the following observation. 
When implementing an H-DNN, it is beneficial to add the regularizer $R_\ell(\bm{\theta}_j) = \|\mathbf{K}_j\|_2+\|\mathbf{J}_j\|_2$ in \eqref{eq:minimization} to control, albeit indirectly, the magnitude of BSMs. 
We exploit this regularization technique for image classification with the MNIST dataset in Section~\ref{sec:numerical_experiments}.

\subsection{Distributed learning through H-DNNs}
\label{sub:multi_agent}

In this subsection, we consider utilizing sparse weight matrices to enable distributed implementations of H-DNNs. Sparsity structures in neural networks can also be used to encode prior information on relations among elements when learning graph data~\cite{zhou2020graph} or to perform distributed control tasks~\cite{gama2021distributed, yang2021communication}.

First, we introduce the necessary notation for sparsity structures and binary matrices. Second, we characterize how we can design the sparsity patterns of the weight matrices $\mathbf{K}(t)$ and $\mathbf{J}(t)$ at each layer to enforce that nodes perform forward and backward propagation while complying with a fixed communication graph. Last, we observe that several sparsity choices lead to the same communication graph. Here, we deal with the continuous-time case for consistency. We extend the result to the DNNs obtained through S-IE discretization in Section~\ref{sec:DiscreteTimeSparseHDNN}. We start by introducing the necessary notation for defining and manipulating sparsity structures.

\subsubsection{Notation}
 For a block-matrix $\mathbf{W} \in \mathbb{R}^{m \times n}$, where $m = \sum_{i=1}^M m_i$ and $n = \sum_{i=1}^N n_i$, we denote its block in position $(i,j)$ as $\mathbf{W}^{i,j} \in \mathbb{R}^{m_i \times n_i}$.
Sparsity structures of matrices can be conveniently represented by binary matrices, i.e. matrices with $0/1$ entries. We use $\{0,1\}^{m \times n}$ to denote the set of $m \times n$ binary matrices.  
Let $\mathbf{Y}, \hat{\mathbf{Y}} \in \{0,1\}^{m \times n}$ and $\mathbf{Z} \in \{0,1\}^{n \times p}$. 
Throughout the paper, we adopt the following conventions: $\mathbf{Y} + \hat{\mathbf{Y}} \in \{0,1\}^{m \times n}$ and $ \mathbf{XZ}\in \{0,1\}^{m \times p}$ are binary matrices having a $0$ entry in position $(i,j)$ if and only if $\mathbf{Y} + \hat{\mathbf{Y}}$ and $\mathbf{XZ}$ have a zero entry in position $(i,j)$, respectively.
We say $\mathbf{X} \leq \hat{\mathbf{X}}$ if and only if $\mathbf{X}(i,j)\leq \hat{\mathbf{X}}(i,j)\;\forall i,j$, where $\mathbf{X}(i,j)$ denotes the entry in position $(i,j)$.
Finally, let $\mathbf{K} \in \mathbb{R}^{m \times n}$ be a matrix divided into $M \times M$ blocks $K^{i,j}\in \mathbb{R}^{m_i \times n_j}$, where $m=\sum_{k=1}^M m_k$ and $n=\sum_{k=1}^M n_k$. Let $\mathbf{R} \in \{0,1\}^{M \times M}$. We write that
\begin{equation*}
    \mathbf{K} \in \text{BlkSprs}(\mathbf{R})\,,
\end{equation*}
if and only if $ \mathbf{R}(i,j) = 0 \implies \mathbf{K}^{i,j} = 0_{m_i \times n_j}$.

\subsubsection{Distributed setup}

Let us  consider a set of nodes $i = 1,\ldots,M$ who collaboratively minimize a global cost function through an H-DNN. More precisely, the feature vector $\mathbf{y} \in \mathbb{R}^{n}$ is split into several subvectors as per
\begin{equation*}
    \mathbf{y} = (\mathbf{y}^{[1]},\dots,\mathbf{y}^{[M]}), \quad \mathbf{y}^{[i]} \in \mathbb{R}^{n_{i}}, ~\forall i=1,\ldots,M\,,
\end{equation*}
where $n= \sum_{i=1}^{M}n_i$. Each node can only share intermediate computations with a subset of the other nodes, denoted as $\emph{neighbors},$ according to a graph with adjacency matrix $\mathbf{S} \in \{0,1\}^{M \times M}$. We indicate the set of neighbors of a node $i$ as $\mathcal{N}_i$ and assume that $i \in \mathcal{N}_i$ for every $i=1,\ldots,M$. The local forward propagation update of node $i$ for every node $i=1,\ldots, M$ must be computed \emph{locally}, that is, it holds 
\begin{equation}
\label{eq:multiagent_FW}
\dot{\mathbf{y}}^{[i]}(t)= {\bf g}^{[i]}\left(\{\mathbf{y}^{[k]}(t)\}_{k \in \mathcal{N}_i}, {\boldsymbol{\theta}}^{[i]}(t)\right), \quad \forall t \in [0,T]\,,
\end{equation}
where $\mathbf{g}^{[i]}$ denotes the forward update of node $i$ and $\bm{\theta}^{[i]}(t)$ indicates the set of weights of node $i$ at time $t$. Similarly, the backward propagation update of node $i$  for every node $i=1,\ldots, M$ must be expressed, for all $t\in [0,T]$, as
\begin{equation}
\label{eq:multiagent_BW}
\dot{\bm{\delta}}^{[i]}(T-t) =  {\bf h}^{[i]}\left(\{\mathbf{y}^{[k]}(T-t),\bm{\delta}^{[k]}(T-t)\}_{k \in \mathcal{N}_i}, {\boldsymbol{\theta}}^{[i]}(t)\right)\,,
\end{equation}
where $\mathbf{h}^{[i]}$ denotes the backward update of node $i$. Next, we establish necessary and sufficient structural conditions on the weights $\bm{\theta}(t)$ of an H-DNN  to achieve localized forward and backward propagations. The proof is reported in Appendix~\ref{sec:th:multi_agent}.

\begin{theorem}[Distributed H-DNNs - continuous-time] 
\label{th:multi_agent}
Consider a distributed H-DNN  whose feature sub-vector for node $i$ has dimension $n_i$, and $\sum_{i=1}^M n_i = n$. Assume that for each $t \in [0,T]$ we have
\begin{equation*}
    \mathbf{K}(t) \in \text{BlkSprs}(\mathbf{R}(t)),\quad \mathbf{J}(t) \in \text{BlkSprs}(\mathbf{T}(t))\,,
\end{equation*}
where $\mathbf{T}(t),\mathbf{R}(t) \in \{0,1\}^{M \times M}$ are such that $\mathbf{T}(t),\mathbf{R}(t) \geq I_M$ and $\mathbf{T}(t)$ is symmetric, as required by the skew-symmetricity of $\mathbf{J}(t)$. Then, the forward update \eqref{eq:ODE_H} and the backward update \eqref{eq:delta_CT} are computed solely based on exchanging information with the neighbors encoded in $\mathbf{S} \in \{0,1\}^{M \times M}$ if, for every $t \in [0,T]$,
\begin{equation}
\label{eq:sparsity_invariance}
    \mathbf{T}(t)\mathbf{R}^\top(t)\mathbf{R}(t) + \mathbf{R}^\top(t)\mathbf{R}(t)\mathbf{T}(t) \leq \mathbf{S} \,.
\end{equation}
\end{theorem}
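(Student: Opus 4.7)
The plan is to reduce the theorem to a block-sparsity bookkeeping exercise based on a single elementary composition rule: if $\mathbf{A}\in\text{BlkSprs}(\mathbf{P})$ and $\mathbf{B}\in\text{BlkSprs}(\mathbf{Q})$ with compatible block sizes, then $\mathbf{AB}\in\text{BlkSprs}(\mathbf{PQ})$. I would state this as a preliminary remark and use it repeatedly. Throughout, I would exploit the fact that the paper's $+$ on binary matrices is an OR, so that $\mathbf{T}\mathbf{R}^\top\mathbf{R}+\mathbf{R}^\top\mathbf{R}\mathbf{T}\leq \mathbf{S}$ is equivalent to the two summands being individually $\leq\mathbf{S}$.

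For the forward update \eqref{eq:ODE_H}, I would introduce $\mathbf{z}(t)=\mathbf{K}(t)\mathbf{y}(t)+\mathbf{b}(t)$, so $\dot{\mathbf{y}}=\mathbf{J}\mathbf{K}^\top \sigma(\mathbf{z})$, and trace the dependencies of $\dot{\mathbf{y}}^{[i]}$ blockwise. The block $\mathbf{z}^{[k]}$ depends on $\mathbf{y}^{[l]}$ exactly when $\mathbf{R}(k,l)=1$; the elementwise activation $\sigma$ preserves the block partition; finally, composing with $\mathbf{J}\mathbf{K}^\top$, whose block-sparsity pattern is $\mathbf{T}\mathbf{R}^\top$ by the composition rule, gives that $\dot{\mathbf{y}}^{[i]}$ depends on $\mathbf{y}^{[l]}$ only when $(\mathbf{T}\mathbf{R}^\top\mathbf{R})(i,l)=1$. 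Thus $\mathbf{T}\mathbf{R}^\top\mathbf{R}\leq\mathbf{S}$ guarantees that the inputs required by node $i$ are all contained in $\mathcal{N}_i$, so \eqref{eq:ODE_H} admits the local form \eqref{eq:multiagent_FW}.

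For the backward update \eqref{eq:delta_CT}, the same argument is applied to $\dot{\bm{\delta}}=\mathbf{K}^\top\mathbf{D}(\mathbf{y},t)\mathbf{K}\mathbf{J}^\top\bm{\delta}$, but I must account for two distinct sources of nonlocal data. The $\bm{\delta}$-propagation is governed by $\mathbf{K}^\top\mathbf{D}\mathbf{K}\mathbf{J}^\top$: since $\mathbf{D}$ is block-diagonal, the composition rule yields block pattern $\mathbf{R}^\top\mathbf{R}\mathbf{T}$ (using that $\mathbf{J}^\top$ inherits the symmetric pattern $\mathbf{T}$). Separately, node $i$ needs the diagonal blocks $\mathbf{D}^{k,k}$ for $k$ such that $(\mathbf{K}^\top)^{i,k}\neq 0$, i.e., $\mathbf{R}(k,i)=1$; each such $\mathbf{D}^{k,k}$ is a function of $\mathbf{z}^{[k]}$ and thus of $\mathbf{y}^{[l]}$ with $\mathbf{R}(k,l)=1$, giving an $\mathbf{y}$-dependency pattern of $\mathbf{R}^\top\mathbf{R}$. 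The hypothesis $\mathbf{T}\geq I_M$ implies $\mathbf{R}^\top\mathbf{R}\leq \mathbf{R}^\top\mathbf{R}\mathbf{T}$, so this extra dependency is absorbed; hence $\mathbf{R}^\top\mathbf{R}\mathbf{T}\leq\mathbf{S}$ suffices for the local form \eqref{eq:multiagent_BW}.

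Combining the two bounds, the assumption $\mathbf{T}\mathbf{R}^\top\mathbf{R}+\mathbf{R}^\top\mathbf{R}\mathbf{T}\leq\mathbf{S}$ implements both patterns simultaneously and the theorem follows. I expect the main obstacle to be bookkeeping rather than conceptual: one must consistently distinguish the patterns contributed by $\mathbf{K}$ versus $\mathbf{K}^\top$ (the latter contributing $\mathbf{R}^\top$), exploit the symmetry of $\mathbf{T}$ forced by the skew-symmetry of $\mathbf{J}$, and recognize that $\mathbf{T}\geq I_M$ is used precisely to absorb the state-dependency arising from $\mathbf{D}$ into the dominant $\bm{\delta}$-propagation pattern. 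Once these identifications are made, each implication reduces to one application of the block-sparsity composition rule.
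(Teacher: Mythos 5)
Your proposal is correct and follows essentially the same route as the paper's proof: both trace blockwise dependencies through the composition $\mathbf{J}\mathbf{K}^\top\sigma(\mathbf{K}\,\cdot\,+\mathbf{b})$ for the forward pass and $\mathbf{K}^\top\mathbf{D}\mathbf{K}\mathbf{J}^\top$ for the backward pass, yielding the patterns $\mathbf{T}\mathbf{R}^\top\mathbf{R}$ and $\mathbf{R}^\top\mathbf{R}\mathbf{T}$ respectively, and both use $\mathbf{T}\geq I_M$ to absorb the $\mathbf{y}$-dependency coming from $\mathbf{D}$ into the $\bm{\delta}$-propagation pattern. The only cosmetic difference is that you package the bookkeeping as a single block-sparsity composition rule, whereas the paper writes out the intermediate quantities $\mathbf{v}^{[i]},\mathbf{w}^{[i]},\mathbf{z}^{[i]},\mathbf{u}^{[i]}$ explicitly.
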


\vspace{0.2cm}

For instance, one can describe the case of completely isolated nodes by selecting $\mathbf{S} = I_M$; in this case $\mathbf{T}(t) = I_M$ and $\mathbf{R}(t)=I_M$ are the only admissible choices according to \eqref{eq:sparsity_invariance}. Naturally, good global classification performance cannot be achieved unless nodes are  allowed to exchange information at least with a small subset of others. As we will also see in numerical examples, it might be convenient to allow nodes to communicate according to a graph that is \emph{connected}. We leave the selection of high-performing communication topologies for future work.

Observe that, for a desired $\mathbf{S}$, there may be multiple choices for  $\mathbf{T}(t)$ and $\mathbf{R}(t)$ that satisfy \eqref{eq:sparsity_invariance}. One trivial choice is $\mathbf{T}(t) = \mathbf{S}$ and $\mathbf{R}(t) = I_M$ at all times. Another simple choice is $\mathbf{T}(t) = I_M$ and  $\mathbf{R}(t)$ such that $\mathbf{R}^\top(t) \mathbf{R}(t)\leq \mathbf{S}$. In general, there are many other choices that all lead to distributed computations based on the desired communication graph, as we illustrate in the following example. 

\vspace{0.1cm}

\begin{example}
Consider a continuous-time H-DNN with $M=4$, $T \in \mathbb{R}$, and the switching weight sparsities $\mathbf{R}(s) = I_M$, $\mathbf{T}(s) = \mathbf{S}$, $\mathbf{T}(\tau) = I_M$,
{\footnotesize
\begin{alignat*}{3}
\mathbf{S} &= \begin{bmatrix}1&1&1&1\\1&1&1&0\\1&1&1&1\\1&0&1&1\end{bmatrix}\,,\quad \mathbf{R}(\tau) &&= \begin{bmatrix}1&1&1&0\\1&1&0&0\\1&0&1&1\\0&0&1&1\end{bmatrix}\,,\\
\mathbf{T}(t) &= \begin{bmatrix}1&0&1&0\\0&1&1&0\\1&1&1&1\\0&0&1&1\end{bmatrix}\,, \quad \mathbf{R}(t)&&= \begin{bmatrix}1&0&0&1\\1&1&0&0\\0&0&1&0\\0&0&0&1\end{bmatrix}\,,
\end{alignat*}
}for every $s \in [0,\frac{T}{3})$ $\tau \in [\frac{T}{3},\frac{2T}{3})$ and every $t \in [\frac{2T}{3},T]$. It is easy to verify that condition \eqref{eq:sparsity_invariance} is satisfied for all $t\in [0,T]$. Hence, the four nodes composing the H-DNN  can choose their non-zero local weights arbitrarily and compute forward and backward updates without any direct exchange of information between node $2$ and node $4$. 
\end{example}

\vspace{0.2cm}

In Section~\ref{sec:DT_analysis}, we see how the result of Theorem~\ref{th:multi_agent} allows for distributed H$_1$-DNN and H$_2$-DNN implementations, and in Section~\ref{sec:numerical_experiments}, we validate the effectiveness of distributed H-DNNs through numerical experiments.

\section{Discrete-time analysis}\label{sec:DT_analysis}

Having established useful properties of H-DNNs from a continuous-time perspective, our next goal is to preserve such properties after discretization. In this section, we achieve this goal by using the  S-IE discretization method. 
First, we analyze the symplecticity of $\frac{\partial {\bf y}_{l+1}}{\partial {\bf y}_l}$. 
Second, we formally prove that the BSMs in discrete-time needed for backpropagation are lower-bounded in norm by the value $1$, independent of the network depth and of the choice for the time-varying weights.
Finally, we extend the analysis to the distributed learning setup in discrete-time.

\subsection{Non-vanishing gradients for H$_2$-DNNs}\label{subs:non-vanishing}

In Section~\ref{sec:symplecticity_CT}, we analyzed  the symplecticity of  $\frac{\partial {\bf y}(T)}{\partial {\bf y}(T-t)}$. 
In this section, we show that, under mild conditions, S-IE discretization preserves the symplectic property \eqref{eq:generalized_symplecticity} for the BSM. 
In turn, this allows us to show that gradients cannot vanish for any H-DNN architecture based on S-IE discretization.
We analyze the symplecticity of $\frac{\partial {\bf y}_{l+1}}{\partial {\bf y}_l}$ with respect to $\mathbf{J}$ as per Definition~\ref{def:symplectic}.

\begin{lemma}\label{lem:numerical_flow}
	Consider the time-varying system \eqref{eq:implicit_euler} and assume that the time-invariant matrix ${\bf J}$ has the block structure in \eqref{eq:implicit_euler_JKb}. Then, one has
	\begin{equation}\label{eq:numerical_symplecticity_y}
	\left[\frac{\partial {\bf y}_{l+1}}{\partial {\bf y}_{l}}\right]^\top {\bf J} \left[\frac{\partial {\bf y}_{l+1}}{\partial {\bf y}_{l}}\right] = {\bf J}\,.
	\end{equation}
    for all $l=1,\dots,N-1$.
\end{lemma}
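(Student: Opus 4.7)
The plan is to observe that, under the block assumption \eqref{eq:implicit_euler_JKb}, the implicit update \eqref{eq:implicit_euler} collapses to the explicit layer equations \eqref{eq:p_update}--\eqref{eq:q_update}; I would then differentiate those equations in closed form and verify the symplecticity identity by direct block algebra.

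\textbf{Computing the Jacobian.} I would differentiate \eqref{eq:p_update} with respect to $(\mathbf{p}_j,\mathbf{q}_j)$ --- this equation is explicit in $\mathbf{p}_{j+1}$, so the computation is immediate --- and then differentiate \eqref{eq:q_update}, where $\mathbf{p}_{j+1}$ must be propagated through the chain rule using the derivatives just obtained. Introducing the shorthands $\mathbf{F}_p := h\mathbf{K}_{p,j}^\top \mathrm{diag}(\sigma'(\mathbf{K}_{p,j}\mathbf{p}_{j+1}+\mathbf{b}_{p,j}))\mathbf{K}_{p,j}$ and $\mathbf{F}_q := h\mathbf{K}_{q,j}^\top \mathrm{diag}(\sigma'(\mathbf{K}_{q,j}\mathbf{q}_{j}+\mathbf{b}_{q,j}))\mathbf{K}_{q,j}$, both \emph{symmetric} by construction, the numerator-layout Jacobian $M$ of the step map $\mathbf{y}_j\mapsto \mathbf{y}_{j+1}$ has blocks $M_{11}=I$, $M_{12}=-\mathbf{X}^\top\mathbf{F}_q$, $M_{21}=\mathbf{X}\mathbf{F}_p$ and $M_{22}=I - \mathbf{X}\mathbf{F}_p\mathbf{X}^\top\mathbf{F}_q$. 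Since the paper adopts denominator layout, the Jacobian $\partial\mathbf{y}_{j+1}/\partial\mathbf{y}_j$ appearing in \eqref{eq:numerical_symplecticity_y} equals $M^\top$, and the target identity therefore reduces to $M\mathbf{J} M^\top = \mathbf{J}$.

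\textbf{Verifying symplecticity.} I would then expand $M\mathbf{J} M^\top$ block by block. Setting $\mathbf{U} := \mathbf{X}^\top \mathbf{F}_q$ and $\mathbf{V} := \mathbf{X}\mathbf{F}_p$, the symmetry of $\mathbf{F}_p$ and $\mathbf{F}_q$ delivers the key equalities $\mathbf{U}\mathbf{X} = \mathbf{X}^\top \mathbf{U}^\top = \mathbf{X}^\top \mathbf{F}_q \mathbf{X}$ and $\mathbf{V}\mathbf{X}^\top = \mathbf{X}\mathbf{V}^\top = \mathbf{X}\mathbf{F}_p\mathbf{X}^\top$. Plugging these into the expansion, the $(1,1)$-block of $M\mathbf{J} M^\top$ collapses to $\mathbf{X}^\top \mathbf{U}^\top - \mathbf{U}\mathbf{X} = 0$; the $(1,2)$-block reduces to $-\mathbf{X}^\top$ after $\mathbf{X}^\top \mathbf{U}^\top \mathbf{V}^\top$ and $\mathbf{U}\mathbf{X}\mathbf{V}^\top$ cancel; the $(2,1)$-block likewise reduces to $\mathbf{X}$; and the $(2,2)$-block vanishes once $\mathbf{V}\mathbf{X}^\top \mathbf{U}^\top \mathbf{V}^\top$ and $\mathbf{V}\mathbf{U}\mathbf{X}\mathbf{V}^\top$ are paired. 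The four resulting blocks reassemble into $\mathbf{J}$.

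\textbf{Main obstacle.} The most delicate step is the $(2,2)$-block: its expansion produces quadruple products in $\mathbf{F}_p,\mathbf{F}_q,\mathbf{X}$ whose cancellation hinges on the symmetry of $\mathbf{F}_p$ and $\mathbf{F}_q$, and it is precisely this symmetry that forces the block-diagonal shape of $\mathbf{K}_j$ in \eqref{eq:implicit_euler_JKb} --- a non-block-diagonal $\mathbf{K}_j$ would make $\mathbf{K}_j^\top\mathbf{D}_j\mathbf{K}_j$ a single $n\times n$ symmetric matrix that does not factor as needed for the cancellations to close. A cleaner, calculation-free alternative is to recognize that under \eqref{eq:implicit_euler_JKb} the Hamiltonian \eqref{eq:nlH} splits as $H(\mathbf{p},\mathbf{q},t) = H_p(\mathbf{p},t) + H_q(\mathbf{q},t)$, so that \eqref{eq:p_update}--\eqref{eq:q_update} is the classical symplectic Euler method for this separable Hamiltonian, read in the non-canonical symplectic coordinates induced by $\mathbf{J}$. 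An appropriate linear change of variables brings $\mathbf{J}$ to its standard canonical form, and \eqref{eq:numerical_symplecticity_y} then follows from the well-known symplecticity of the symplectic Euler scheme \cite{Hairer2006book}.
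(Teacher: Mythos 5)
Your block computation is correct as far as it goes: with the explicit updates \eqref{eq:p_update}--\eqref{eq:q_update}, the Jacobian blocks you list are right, the layout conversion to $M\mathbf{J}M^\top=\mathbf{J}$ is handled properly, and the cancellations via $\mathbf{U}\mathbf{X}=\mathbf{X}^\top\mathbf{U}^\top$ and $\mathbf{V}\mathbf{X}^\top=\mathbf{X}\mathbf{V}^\top$ all close. The gap is one of generality: the lemma assumes only that $\mathbf{J}$ has the block structure of \eqref{eq:implicit_euler_JKb}, \emph{not} that $\mathbf{K}_j$ is block-diagonal. For a full $\mathbf{K}_j$ the Hamiltonian does not separate, the update \eqref{eq:implicit_euler} remains genuinely implicit in $\mathbf{p}_{j+1}$ (it appears inside $\partial H/\partial\mathbf{q}$ as well), and your first step --- ``the implicit update collapses to \eqref{eq:p_update}--\eqref{eq:q_update}'' --- does not apply. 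Your diagnosis in the last paragraph is also backwards: the block-diagonal $\mathbf{K}_j$ is needed only to make the forward pass explicitly computable, not for symplecticity. The identity \eqref{eq:numerical_symplecticity_y} holds for arbitrary $\mathbf{K}_j$; one obtains the Jacobian by implicit differentiation, which yields a relation of the form $\frac{\partial\mathbf{y}_{j+1}}{\partial\mathbf{y}_j}\,\boldsymbol{\Gamma}=\boldsymbol{\Lambda}$ with $\boldsymbol{\Gamma},\boldsymbol{\Lambda}$ built from the Hessian blocks $H_{pp},H_{pq},H_{qp},H_{qq}$, and symplecticity reduces to $\boldsymbol{\Lambda}^\top\mathbf{J}\boldsymbol{\Lambda}=\boldsymbol{\Gamma}^\top\mathbf{J}\boldsymbol{\Gamma}$, which checks out by inspection even with nonzero cross-Hessians. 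This general form is what Theorem~\ref{th:nonvanishing_gradients_DT} actually invokes.

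For comparison, the paper's proof proceeds by embedding the time-varying Hamiltonian into an autonomous one on an extended phase space $(\mathbf{p},\mathbf{q},\varepsilon,t)$, invoking the symplecticity of the S-IE step from Theorem~3.3 of \cite{Hairer2006book}, and then extracting the $(\mathbf{p},\mathbf{q})$-block via the $\boldsymbol{\Gamma},\boldsymbol{\Lambda}$ factorization above. Your direct verification is more elementary and arguably more transparent for the architecture actually implemented (which does use block-diagonal $\mathbf{K}_j$), but to prove the lemma as stated you must redo the Jacobian via the implicit function theorem. One further caveat on your ``calculation-free'' alternative: reducing $\mathbf{J}$ to canonical form by a linear change of variables requires $\mathbf{X}$ to be invertible, whereas Definition~\ref{def:symplectic} and the lemma permit a singular $\mathbf{X}$; the direct algebraic verification needs no such assumption.
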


The proof of Lemma~\ref{lem:numerical_flow} can be found in Appendix~\ref{ap:lem_numerical_flow} and  is built upon the result of Theorem 3.3 of Section VI in \cite{Hairer2006book} and the definition of extended Hamiltonian systems \cite{deGosson2011book}. 
Theorem 3.3 in \cite{Hairer2006book} proves that the numerical flow of a \textit{time-invariant} Hamiltonian system with ${\bf J} = \begin{bsmallmatrix}0 & I \\ -I & 0 \end{bsmallmatrix}$ is symplectic. 
Moreover, by defining an extended Hamiltonian system, we can embed the study of a time-dependent Hamiltonian function into the time-independent case by defining an extended phase space of dimension $n+2$ instead of $n$.

Lemma~\ref{lem:numerical_flow} allows us to prove that the BSMs of H$_2$-DNNs are lower-bounded in norm by $1$ irrespective of the parameters and the depth of the network.  

\begin{theorem}\label{th:nonvanishing_gradients_DT}
\label{th:nonvanishing_gradients_numerical}
Consider the H$_2$-DNN in \eqref{eq:implicit_euler}. Assume that ${\bf J}$ has the block structure in \eqref{eq:implicit_euler_JKb} and $\mathbf{J}\neq 0_n$.
	Then,
    \begin{equation}
    \label{eq:nonvanishing_gradients_DT}
        \norm{\frac{\partial {\bf y}_N}{\partial {\bf y}_{N-j}}} \geq 1\,,
    \end{equation}
     for all $j = 0,\dots,N-1$, where $\norm{\cdot}$ denotes any sub-multiplicative norm.
\end{theorem}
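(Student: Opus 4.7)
The plan is to mirror the proof strategy used for the continuous-time statement in Theorem~\ref{th:nonvanishing_gradients}, but starting from the single-step symplecticity guaranteed by Lemma~\ref{lem:numerical_flow} rather than from Lemma~\ref{le:generalized_symplecticity}. The overall structure will be: (i) express the full BSM $\frac{\partial \mathbf{y}_N}{\partial \mathbf{y}_{N-j}}$ as a product of single-layer Jacobians via the chain rule; (ii) use the fact that symplectic matrices form a group under multiplication to conclude that this product is itself symplectic with respect to $\mathbf{J}$; (iii) apply submultiplicativity of the norm to extract the lower bound $1$.

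More concretely, I will first write
\begin{equation*}
\frac{\partial \mathbf{y}_N}{\partial \mathbf{y}_{N-j}} \;=\; \prod_{l=N-j}^{N-1}\frac{\partial \mathbf{y}_{l+1}}{\partial \mathbf{y}_{l}}\,,
\end{equation*}
following \eqref{eq:BSM}. Lemma~\ref{lem:numerical_flow} ensures that each factor $\frac{\partial \mathbf{y}_{l+1}}{\partial \mathbf{y}_{l}}$ is symplectic with respect to $\mathbf{J}$ in the generalized sense of Definition~\ref{def:symplectic}. The small check I need to make explicit is that this notion of symplecticity is preserved by products: if $\mathbf{M}_1^\top \mathbf{J}\mathbf{M}_1 = \mathbf{J}$ and $\mathbf{M}_2^\top \mathbf{J}\mathbf{M}_2 = \mathbf{J}$, then $(\mathbf{M}_1\mathbf{M}_2)^\top \mathbf{J}(\mathbf{M}_1\mathbf{M}_2) = \mathbf{M}_2^\top(\mathbf{M}_1^\top \mathbf{J}\mathbf{M}_1)\mathbf{M}_2 = \mathbf{M}_2^\top \mathbf{J}\mathbf{M}_2 = \mathbf{J}$. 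Iterating this observation over the $j$ factors in the product yields
\begin{equation*}
\left(\frac{\partial \mathbf{y}_N}{\partial \mathbf{y}_{N-j}}\right)^\top \mathbf{J} \left(\frac{\partial \mathbf{y}_N}{\partial \mathbf{y}_{N-j}}\right) \;=\; \mathbf{J}\,.
\end{equation*}

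With this identity in hand, the last step replicates verbatim the argument used in the proof of Theorem~\ref{th:nonvanishing_gradients}. Taking any submultiplicative norm of both sides and using submultiplicativity gives
\begin{equation*}
\|\mathbf{J}\| \;=\; \left\|\left(\tfrac{\partial \mathbf{y}_N}{\partial \mathbf{y}_{N-j}}\right)^\top \mathbf{J}\, \tfrac{\partial \mathbf{y}_N}{\partial \mathbf{y}_{N-j}}\right\| \;\leq\; \left\|\tfrac{\partial \mathbf{y}_N}{\partial \mathbf{y}_{N-j}}\right\|^2 \|\mathbf{J}\|\,,
\end{equation*}
and dividing through by $\|\mathbf{J}\|\neq 0$ (which holds whenever $\mathbf{J}$ is a nonzero skew-symmetric matrix of the form in \eqref{eq:implicit_euler_JKb}, as is tacitly assumed) yields \eqref{eq:nonvanishing_gradients_DT}.

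I do not expect serious obstacles. The only place that requires care is the step that lifts single-layer symplecticity to a product of arbitrarily many layers; this is where I rely on the group-like closure property of symplectic matrices under multiplication, which is immediate from the definition. A minor edge case is $j=0$, where the product is empty and the BSM reduces to $I_n$, so the inequality is trivial. Everything else is an immediate consequence of Lemma~\ref{lem:numerical_flow}, which has already absorbed the real technical content (the semi-implicit Euler integrator preserving symplecticity, via the extended-phase-space argument from~\cite{Hairer2006book,deGosson2011book}).
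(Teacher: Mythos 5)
Your proposal is correct and follows essentially the same route as the paper's proof: both decompose the BSM into single-layer Jacobians via the chain rule, lift the single-step symplecticity of Lemma~\ref{lem:numerical_flow} to the full product by iterating the conjugation identity, and then extract the bound from submultiplicativity exactly as in Theorem~\ref{th:nonvanishing_gradients}. Your explicit remarks on the group-closure step, the empty product at $j=0$, and the requirement $\|\mathbf{J}\|\neq 0$ are minor clarifications the paper leaves implicit.
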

\begin{proof}
We know by Lemma~\ref{lem:numerical_flow} that \eqref{eq:numerical_symplecticity_y} holds. 
Moreover,
\begin{equation}\label{eq:corol2_aux1}
    \frac{\partial {\bf y}_{l+2}}{\partial {\bf y}_{l}} = \frac{\partial {\bf y}_{l+1}}{\partial {\bf y}_{l}} \frac{\partial {\bf y}_{l+2}}{\partial {\bf y}_{l+1}}\,.
\end{equation}
Then, by developing the terms in $\frac{\partial {\bf y}_{N}}{\partial {\bf y}_{N-j}}$ as per \eqref{eq:corol2_aux1}, calculating its transpose when needed, and applying iteratively \eqref{eq:numerical_symplecticity_y}, we obtain
\begin{equation*}
	\left[\frac{\partial {\bf y}_{N}}{\partial {\bf y}_{N-j}}\right]^\top {\bf J} \left[\frac{\partial {\bf y}_{N}}{\partial {\bf y}_{N-j}}\right] = {\bf J}\,.
\end{equation*}
Hence, we have 
\begin{align*}
     \|\mathbf{J}\| &= \norm{\left(\frac{\partial {\bf y}_{N}}{\partial {\bf y}_{N-j}}\right)^\top \mathbf{J} \left(\frac{\partial {\bf y}_{N}}{\partial {\bf y}_{N-j}}\right)}
     \leq \norm{\frac{\partial {\bf y}_N}{\partial {\bf y}_{N-j}}}^2 \norm{\mathbf{J}}\,,
 \end{align*}
for all $j = 0,\dots,N-1$. This inequality implies~\eqref{eq:nonvanishing_gradients_DT}.
\end{proof}

\subsection{Distributed H$_2$-DNN implementation}\label{sec:DiscreteTimeSparseHDNN}

We show how to port the distributed setup developed in Section~\ref{sub:multi_agent} to H$_2$-DNNs. Moreover, we analyze how the choice of the sparsity structures impacts the weights $\bm{\theta}^\star$ verifying \eqref{eq:stationary_points}.
 
\begin{proposition}[Distributed H$_2$-DNN]
\label{prop:distributed_implicit_Euler}
Consider the system \eqref{eq:ODE_H} and assume that \eqref{eq:implicit_euler_JKb} holds. The S-IE discretization when considering $\mathbf{y}_j = (\mathbf{p}_j,\mathbf{q}_j)$ gives the forward propagation updates \eqref{eq:p_update}-\eqref{eq:q_update} and the backward propagation updates \eqref{eq:backward_implicit_euler}.
Moreover, consider nodes $i =1,\dots,M$ whose feature sub-vectors are denoted as $\mathbf{y}_j^{[i]}=(\mathbf{p}_j^{[i]},\mathbf{q}^{[i]}_j)$ at every layer $j$.  Assume that at each layer we impose
\begin{equation*}
    \mathbf{K}_{p,j}, \mathbf{K}_{q,j}\in \text{BlkSprs}(\mathbf{R}_j),\quad \mathbf{X} \in \text{BlkSprs}(\mathbf{T})\,,
\end{equation*}
where $\mathbf{T},\mathbf{R}_j \in \{0,1\}^{M \times M}$ are such that $\mathbf{T},\mathbf{R}_j \geq I_M$ and $\mathbf{T}$ is symmetric. Then, the following two facts hold.
\begin{itemize}
    \item[$i)$] The forward update \eqref{eq:p_update}-\eqref{eq:q_update} and the backward update \eqref{eq:backward_implicit_euler} are computed solely based on exchanging information with the neighbors encoded in $\mathbf{S} \in \{0,1\}^{M \times M}$ if, for every $j =0,\dots,N-1$,
\begin{equation}
\label{eq:sparsity_invariance_IE}
    \mathbf{T}\mathbf{R}^\top_j\mathbf{R}_j + \mathbf{R}^\top_j\mathbf{R}_j\mathbf{T} \leq \mathbf{S} \,.
\end{equation}
\item[$ii)$] The training of this H-DNN asymptotically steers the weights $\bm{\theta}_j$ towards a $\bm{\theta}^\star_j$ such that for all $j =0,\dots,N-1$
\begin{align*}
    &\nabla_{\mathbf{K}_{p,j}}\tilde{\mathcal{L}}(\bm{\theta}^\star_j) \odot \mathbf{R}_j = \nabla_{\mathbf{K}_{q,j}}\tilde{\mathcal{L}}(\bm{\theta}^\star_j)\odot \mathbf{R}_j = 0_{n \times n} \,,\\
    &\nabla_{\mathbf{X}}\tilde{\mathcal{L}}(\bm{\theta}^\star_j)\odot \mathbf{T} = 0_{n \times n}\,, \quad \nabla_{\mathbf{b}_j}\tilde{\mathcal{L}}(\bm{\theta}^\star_j) = 0_{n \times 1}\,,
\end{align*}
where $\odot$ denotes the Hadamard product and $\tilde{\mathcal{L}} = \mathcal{L}+R$.
\end{itemize}

 \end{proposition}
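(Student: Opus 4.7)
The plan is to prove both claims by porting the continuous-time argument of Theorem~\ref{th:multi_agent} to the S-IE layer equations \eqref{eq:p_update}-\eqref{eq:q_update} and \eqref{eq:backward_implicit_euler}, exploiting the assumed block structure of $\mathbf{J}$, $\mathbf{K}_j$ and $\mathbf{b}_j$ in \eqref{eq:implicit_euler_JKb}. Part (i) reduces to a sparsity-propagation argument across a single layer, and part (ii) follows from the first-order optimality of gradient descent over a linearly constrained parameter set.

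For part (i), I would track how the block-sparsity structure composes across a single forward layer. In \eqref{eq:p_update}, node $i$ first assembles $\mathbf{K}_{q,j}\mathbf{q}_j+\mathbf{b}_{q,j}$, which needs data from nodes in the block-column support of $\mathbf{R}_j$; the subsequent elementwise $\sigma(\cdot)$ is local; the multiplication by $\mathbf{K}_{q,j}^\top$ propagates information further along $\mathbf{R}_j^\top$; and the final multiplication by $\mathbf{X}^\top$ propagates along $\mathbf{T}^\top = \mathbf{T}$. Composing these stages (using the block-sparsity conventions introduced in Section~\ref{sub:multi_agent}) shows that the update of $\mathbf{p}_{j+1}^{[i]}$ depends on neighbors prescribed by the block-sparsity pattern $\mathbf{T}\mathbf{R}_j^\top \mathbf{R}_j$. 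Applied to \eqref{eq:q_update}, the same reasoning yields the symmetric pattern $\mathbf{R}_j^\top \mathbf{R}_j \mathbf{T}$. Both updates can then be carried out solely through the communication graph $\mathbf{S}$ whenever the two patterns are dominated by $\mathbf{S}$, which is equivalent (since all entries lie in $\{0,1\}$) to the sum condition \eqref{eq:sparsity_invariance_IE}. Inspecting \eqref{eq:backward_implicit_euler} one observes that it has exactly the same matrix-composition structure as \eqref{eq:p_update}-\eqref{eq:q_update} up to transpositions, which leave the relevant patterns unchanged because $\mathbf{T}$ and $\mathbf{R}_j^\top\mathbf{R}_j$ are symmetric. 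Hence \eqref{eq:sparsity_invariance_IE} also guarantees locality of the backward sweep.

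For part (ii), I would observe that imposing $\mathbf{K}_{p,j},\mathbf{K}_{q,j} \in \text{BlkSprs}(\mathbf{R}_j)$ and $\mathbf{X}\in \text{BlkSprs}(\mathbf{T})$ amounts to fixing all entries outside the prescribed block-supports at zero, effectively restricting the parameter vector $\bm{\theta}$ to an affine subspace. Gradient descent \eqref{eq:GD_update} acts only on the free coordinates, so at any stationary point the partial derivatives with respect to the free entries vanish, whereas the partial derivatives with respect to the entries forced to zero are unconstrained. Writing this coordinate-wise condition as a Hadamard product with the (appropriately lifted to element-wise) binary masks $\mathbf{R}_j$ and $\mathbf{T}$ yields exactly the stated equalities; the biases $\mathbf{b}_j$ are unconstrained, so $\nabla_{\mathbf{b}_j}\tilde{\mathcal{L}}(\bm{\theta}_j^\star)=0$ is the standard unconstrained stationarity condition inherited from \eqref{eq:stationary_points}.

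The main obstacle is not conceptual but notational: the block-sparsity sets $\text{BlkSprs}(\mathbf{R}_j),\text{BlkSprs}(\mathbf{T})$ are described by $M\times M$ binary matrices while the gradients in claim (ii) live in full $n\times n$ (or $n\times 1$) ambient spaces, so the $\odot$ operations must be interpreted as block-Hadamard products (equivalently, as element-wise products against the lifted masks $\mathbf{R}_j\otimes 1_{n_i\times n_j}$ and $\mathbf{T}\otimes 1_{n_i\times n_j}$). Once this identification is made explicit, the argument is mostly bookkeeping: the nontrivial content is encapsulated in the sparsity propagation of part (i), while part (ii) is a direct consequence of the gradient descent confining $\bm{\theta}$ to the sparse subspace.
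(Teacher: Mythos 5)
Your proposal is correct and follows essentially the same route as the paper: part (i) by porting the sparsity-propagation argument of Theorem~\ref{th:multi_agent} to the split matrices $\mathbf{K}_{p,j}$, $\mathbf{K}_{q,j}$ and $\mathbf{X}$, and part (ii) by restricting $\tilde{\mathcal{L}}$ to the free (non-zeroed) coordinates and invoking first-order stationarity there while leaving the masked components unconstrained. Your write-up is in fact somewhat more explicit than the paper's proof, particularly in tracking the transposed patterns $\mathbf{T}\mathbf{R}_j^\top\mathbf{R}_j$ versus $\mathbf{R}_j^\top\mathbf{R}_j\mathbf{T}$ for the forward and backward sweeps and in making the block-to-elementwise lifting of the masks explicit.
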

\begin{proof}
The proof of point $i)$ is analogous to that of Theorem~\ref{th:multi_agent} by splitting the weight matrix $\mathbf{K}$ into submatrices $\mathbf{K}_p$ and $\mathbf{K}_q$. For point $ii)$, the regularized loss $\tilde{\mathcal{L}}$ can be rewritten as a function of only those weight entries that are not constrained to be equal to $0$. The neural network training will thus steer these non-zero entries to a point where the gradient value is zero along the corresponding components. However, nothing can be said about the gradient value along the components such that $\mathbf{R}_j(i,l) = 0$ or $\mathbf{T}(k,v) = 0$, as the corresponding weights cannot be selected to further decrease the loss function.
\end{proof}

 The second point of Proposition~\ref{prop:distributed_implicit_Euler} shows that, in a distributed H-DNN complying with a given structure, the achieved stationary points may be very different depending on the specific choice of weight sparsities complying with \eqref{eq:sparsity_invariance_IE}. Hence, the achieved performance will also strongly depend on this choice. Simple algorithms to efficiently determine optimized choices for $\mathbf{T}$ and $\mathbf{R}_j$ were proposed in the context of structured feedback control; we refer the interested reader to the concept of sparsity invariance  \cite{furieri2019separable,furieri2020sparsity}. 

To conclude, we note that a distributed H$_1$-DNN based on FE discretization can also be obtained in a straightforward way. Indeed,  the FE update equations (both forward \eqref{eq:H-DNN_fE} and backward \eqref{eq:FE_BP}) preserve the same structure as their continuous-time counterparts (\eqref{eq:ODE_H} and \eqref{eq:delta_CT}).
The performance of distributed H-DNNs is tested in Section~\ref{sec:numerical_experiments}.

\section{Numerical experiments}\label{sec:numerical_experiments}

In this section, we demonstrate the potential of H-DNNs on various classification benchmarks, including the MNIST dataset.\footnote{The code is available at: \url{https://github.com/DecodEPFL/HamiltonianNet}.}
Our first goal is to show that H-DNNs are expressive enough to achieve state-of-art performance on these tasks, even when distributed architectures are used. 
Then, we validate our main theoretical results by showing that gradients do not vanish despite considering deep H$_2$-DNNs. Instead, when using the same data, standard multilayer perceptron networks do suffer from this problem, which causes early termination of the learning process.

Further numerical experiments showing how H-DNNs can be embedded in complex architectures for image classification can be found in Appendix~\ref{ap:CIFAR}. Specifically, 
we test an enhanced architecture of H-DNNs over the CIFAR-10 dataset and we show comparable performance with state-of-the-art architectures.

\subsection{Binary classification examples}

We consider two benchmark classification problems from \cite{Haber_2017} with two categories and two features. 
First, we show that H$_1$- and H$_2$-DNNs perform as well as the networks MS$_i$-DNNs, $i=1,2,3$ given in Appendix~\ref{ap:relation_existing_arq} and introduced in \cite{Haber_2017, Chang19} and \cite{Chang18a}.
Second, we test a distributed H$_2$-DNN architecture, showing that it can achieve excellent accuracy despite local computations.
In all cases, we optimize over the weights ${\bf K}_j$ and ${\bf b}_j$ and set 
\begin{equation}
    {\bf J}_j = {\bf J} =  \begin{bmatrix} 0 & -I \\ I &0 \end{bmatrix}\,,
    \label{eq:J_simulations}
\end{equation}
for $j=0,\dots,N-1$.

As outlined in Section \ref{sec:cost}, we complement the DNNs with an output layer ${\bf y}_{N+1} = {\bf f}_N({\bf y}_N, \boldsymbol{\theta}_N)$ consisting of a sigmoid activation function applied over a linear transformation of ${\bf y}_N$.
In \eqref{eq:minimization}, we use a standard binary cross-entropy loss function $\mathcal{L}$ \cite{GoodBengCour2016}.
The optimization problem is solved by using the Adam algorithm; we refer to Appendix~\ref{ap:implementation_2d} for details on the implementation.

\subsubsection{Comparison with existing networks}

We consider the ``Swiss roll'' and the ``Double moons'' datasets shown in Figure~\ref{fig:examplesClassification}. By performing feature augmentation \cite{dupont2019augmented}, we use input feature vectors given by 
$({\bf y}_0^k, 0_{2\times 1}) \in \mathbb{R}^4$ 
where 
${\bf y}_0^k \in \mathbb{R}^2, k=1,\dots,s$ are the input datapoints.

In Table~\ref{tab:classif_L4DC}, we present the classification accuracies over test sets when using MS- and H-DNNs with different number of layers. We also give the number of parameters per layer of each network. 
It can be seen that, for a fixed number of layers, the performances of H$_1$-DNN and H$_2$-DNN are similar or better than the other networks. 
This can be motivated by the fact that the new architectures are more expressive than MS$_i$-DNNs, since they have, in general, more parameters per layer. Although not  shown, H- and MS-DNNs with the same number of parameters have similar performance.

\begin{table}
	\begin{center}
    \caption{Classification accuracies over test sets for different examples using different network structures with ${\bf y}_j \in \mathbb{R}^4$  for each layer $j$. The first three columns represent existing architectures (MS$_i$-DNN) while the two last columns provide the results for the new H$_i$-DNNs. The first two best accuracies in each row are in bold. The last row provides the number of parameters per layer of each network.}
    \label{tab:classif_L4DC}
    \begin{tabular}{r|c|c|c|c||c|c}
        & & MS$_1$- & MS$_2$- & MS$_3$- & H$_1$- & H$_2$- \\
        \hline
        Swiss & 4 layers & 77.1\% & 79.7\% & \textcolor{dark-gray}{\textbf{90.1\%}} & \textbf{93.6\%} & 84.3\% \\
        roll & 8 layers & 91.5 \% & 90.7\% & 87.0\% & \textbf{99.0\%} & \textcolor{dark-gray}{\textbf{95.5\%}} \\
        & 16 layers & 97.7\% & 99.7\% & 97.1\% & \textcolor{dark-gray}{\textbf{99.8\%}} &  \textbf{100\%} \\
        & 32 layers & \textbf{100\%} & \textbf{100\%} & 98.4\% & \textcolor{dark-gray}{\textbf{99.8\%}} & \textbf{100\%} \\ 
        & 64 layers & \textbf{100\%} & \textbf{100\%} & \textbf{100\%} & \textcolor{dark-gray}{\textbf{99.8\%}} & \textbf{100\%} \\
        \hline
        Double & 1 layer & 92.5\% & 91.3\% & \textcolor{dark-gray}{\textbf{97.6\%}} & \textbf{100\%} &  94.4\% \\ 
        moons & 2 layers & 98.2\% & 94.9\% & \textcolor{dark-gray}{\textbf{99.8\%}} & \textbf{100\%} &  \textcolor{dark-gray}{\textbf{99.8\%}} \\
        & 4 layers & \textcolor{dark-gray}{\textbf{99.5\%}} & \textbf{100\%} & \textbf{100\%} & \textbf{100\%} & \textbf{100\%} \\
        \hline
        \hline
        \multicolumn{2}{c|}{\textit{\# param. per layer}} & $8$ & $10$ & $12$ & $20$ & $12$\\
        \hline
    \end{tabular}
	\end{center}
\end{table}

\begin{figure}
	\centering
	\begin{subfigure}{.48\linewidth}
		\centering
		\includegraphics[width=\linewidth]{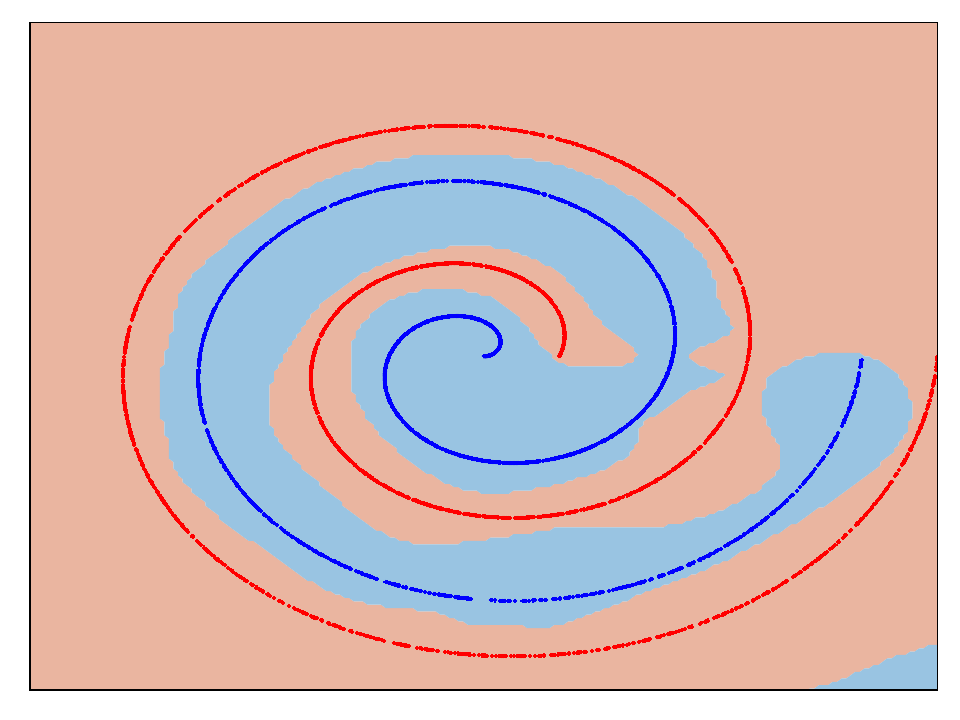}  
		\caption{``Swiss roll'' - 64 layers}
		\label{fig:examplesClassification_swissroll}
	\end{subfigure}%
	\begin{subfigure}{.48\linewidth}
		\centering
		\includegraphics[width=\linewidth]{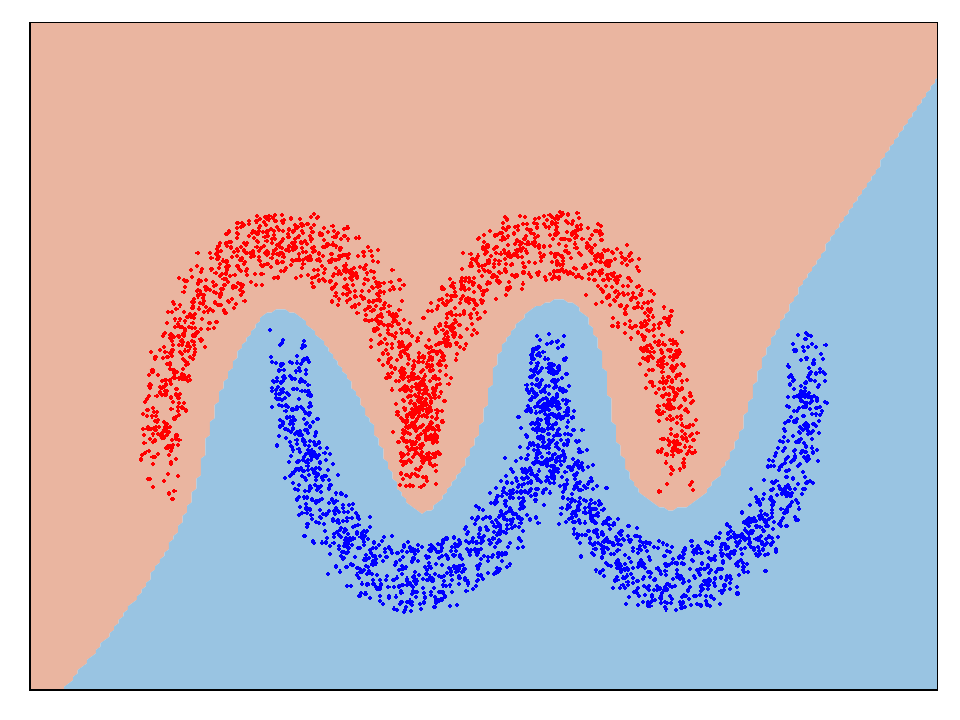}  
		\caption{``Double moons'' - 4 layers}
		\label{fig:examplesClassification_doublemoons}
	\end{subfigure}%
	\caption{Results for the H$_1$-DNN architecture. Colored regions represent the predictions of the trained DNNs.}
	\label{fig:examplesClassification}
\end{figure}

\subsubsection{Distributed vs. centralized training}\label{subsubsub:example_distributed}

We test the effectiveness of H$_2$-DNNs for a distributed learning task under constraints \eqref{eq:implicit_euler_JKb}.
We assume that $8$ nodes have access to their own local features ${\bf y}^{[i]} \in \mathbb{R}^2, i= 1,\dots,8$.
Moreover, they communicate with a small set of neighbors ${\bf y}^{[j]}$  according to the connected graph in Figure~\ref{fig:interconnection}, whose adjacency matrix is denoted with ${\bf S}$.
Therefore, each node computes local forward and backward updates only based on the features known to itself and to its first- and second-order neighbors.

In order to comply with the communication constraints, we exploit \eqref{eq:sparsity_invariance_IE} in Proposition~\ref{prop:distributed_implicit_Euler}, which is verified, for $j=0,
\ldots,N$, by
 ${\bf T}=I_8$ and ${\bf R}_j$ equal to the adjacency matrix of the subgraph including only the blue edges in Figure~\ref{fig:interconnection}.
Since ${\bf J}_j$ in the form of \eqref{eq:implicit_euler_JKb} is chosen equal to \eqref{eq:J_simulations}, then  ${\bf X} \in \text{BlkSprs}({\bf T})$.
Accordingly, the weights ${\bf K}_{p,j}$, ${\bf K}_{q,j}$ are constrained to lie in $\text{BlkSprs}({\bf R}_j)$. 

Then, we initialize the augmented input feature vector $\tilde{\bf y}_0 = ({\bf p}_0, {\bf q}_0) \in \mathbb{R}^{16}$ with zeros, except for the two entries $\mathbf{p}_0^{[1]}= y^k_{0,0}$ and $\mathbf{q}_0^{[5]}= y^k_{0,1}$. 

\begin{figure}
	\centering
	\includegraphics[width=0.4\linewidth]{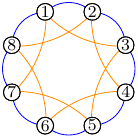}
	\caption{Communication network for the distributed H$_2$-DNN in Section~\ref{subsubsub:example_distributed}. Blue and orange edges connect each node with its first-order and second-order neighbors, respectively.}
	\label{fig:interconnection}
\end{figure}

Table~\ref{tab:classif_sparse} reports the prediction accuracies for the distributed and centralized learning settings when using the ``Swiss roll'' and ``Double circles'' datasets shown in Figure~\ref{fig:doublecircles_swissroll}. Moreover, the colored regions in Figure \ref{fig:doublecircles_swissroll} provide an illustration of the predictive power of the distributed H$_2$-DNNs with four layers. From Table~\ref{tab:classif_sparse}, 
even if the centralized architectures perform better for shallow networks (e.g. 2 layers), it is worth noticing that the distributed architectures use less than half of the parameters. 
Moreover, if the networks are deep enough, the classification task is successfully solved in both settings.

\begin{table}
	\begin{center}
    \caption{Classification accuracy over test sets for two benchmark examples by using centralized and distributed H$_2$-DNNs (the latter using the communication network in Fig.~\ref{fig:interconnection}).}
    \label{tab:classif_sparse}
    \small
    \begin{tabular}{r|cc|cc}
        & \multicolumn{2}{c|}{Swiss roll} & \multicolumn{2}{c}{Double circles} \\
        \cline{2-5}
        \textit{\# layers} & Distributed & Centralized & Distributed & Centralized \\
        \hline
        2 & 91.05\% & 99.90\% & 98.65\% & 100\%  \\ 
        3 & 99.08\% & 100\% & 99.42\% & 99.85\% \\
        4 & 100\% & 100\% & 99.70\% & 100\% \\
        \hline
        \textit{\# param.} & \multirow{2}{*}{112} & \multirow{2}{*}{272} & \multirow{2}{*}{112} & \multirow{2}{*}{272} \\
        \textit{per layer} & & & & \\
    \end{tabular}
	\end{center}
\end{table}

\begin{figure}
	\begin{subfigure}{0.49\linewidth}
	\centering
		\includegraphics[width=\linewidth]{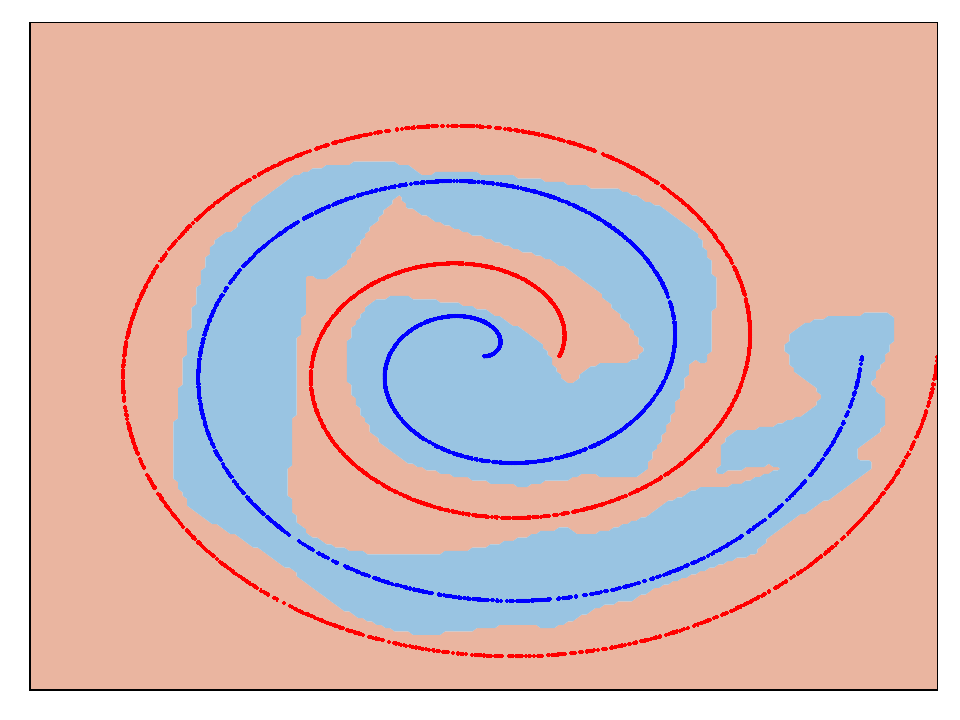}
		\caption{``Swiss roll'' dataset}
	\end{subfigure}
	\begin{subfigure}{0.49\linewidth}
	\centering
		\includegraphics[width=\linewidth]{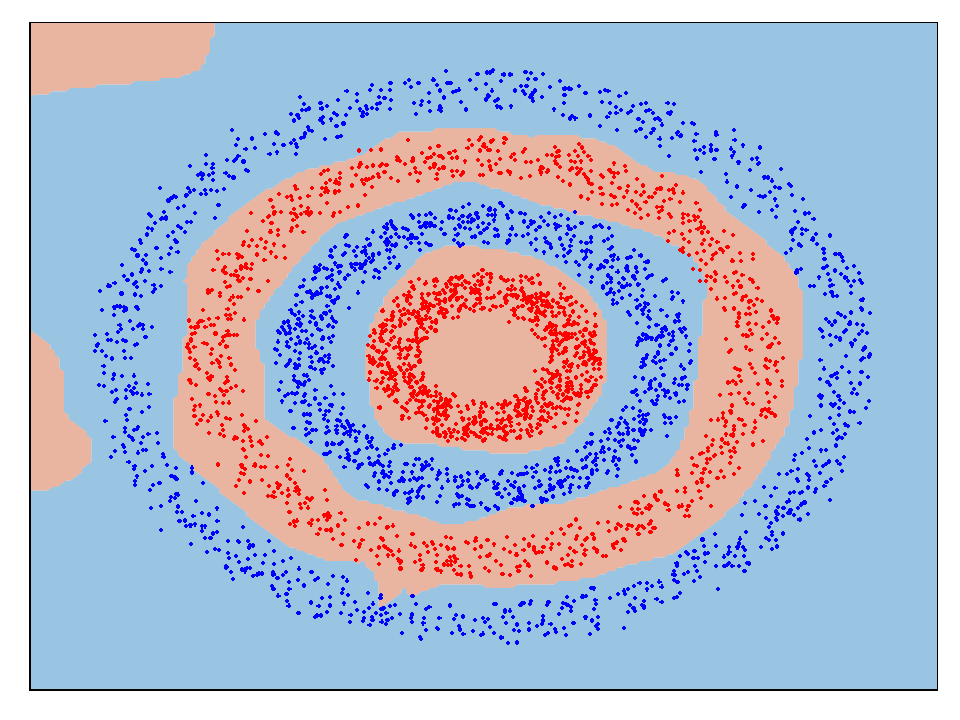}
		\caption{``Double circles'' dataset}
	\end{subfigure}%
	\caption{Results for the distributed learning with 4-layer H$_2$-DNNs. Colored regions represent the predictions of the trained DNNs.}
	\label{fig:doublecircles_swissroll}
\end{figure}

\subsection{Experiments with the MNIST dataset}\label{sec:mnist}

We evaluate our methods on a more complex example: the image classification benchmark MNIST.\footnote{\url{http://yann.lecun.com/exdb/mnist/}}
The dataset consists of $28 \times 28$ digital images in grayscale of hand-written digits from 0 to 9 with their corresponding labels. It contains 60,000 training examples and 10,000 test examples.

Following \cite{Haber_2017}, we use a network architecture consisting of a convolutional layer followed by an H-DNN and an output layer with soft-max activation function. 
The convolutional layer is a linear transformation that expands the data from 1 to 8 channels, and 
the network output is a vector in $\mathbb{R}^{10}$ representing the probabilities of an image belonging to each of the 10 classes.

We compare the performance of MS$_1$-DNNs and H$_1$-DNNs\footnote{Similar results can be obtained using other MS- or H-DNNs.} with 2, 4, 6 and 8 layers. 
We set $h = 0.4$ for MS$_1$-DNNs and $h = 0.5$ for H$_1$-DNNs. The implementation details can be found in Appendix~\ref{ap:implementation_mnist}.

In Table~\ref{tab:mnist}, we summarize the train and test accuracies of the different DNN architectures. 
The first row of the table provides, as a baseline, the results obtained when omitting the Hamiltonian DNN block, i.e., when using only a convolutional layer followed by the output layer. We observe that both MS$_1$-DNN and H$_1$-DNN achieve similar performance.
Note that, while the training errors are almost zero, the test errors are reduced when increasing the number of layers, hence showing the benefit of using deeper networks.
Moreover, these results are in line with test accuracies obtained when using standard convolutional layers or ResNets instead of H-DNNs~\cite{Chang18a,Haber_2017}.

\begin{table}
	\small
	\begin{center}
    \caption{Classification accuracies over training and test sets for the MNIST example when using MS$_1$-DNN and  H$_1$-DNN architectures. A convolutional layer and an output layer are added before and after each DNN. The first row, corresponding to 0 layers, refers to a network with a single convolutional layer followed by an output layer. 
    }
    \label{tab:mnist}
    \begin{tabular}{|c|c c|c c|}
        \hline
        Number of& \multicolumn{2}{c}{MS$_1$-DNN} & \multicolumn{2}{|c|}{H$_1$-DNN} \\ \cline{2-5} 
        layers & Train & Test & Train & Test \\ \hline
        \hline
        0 &  93.51\% & 92.64\% & - & - \\ 
        2 & 99.20\% & 97.95\% & 99.00\% & 98.04\% \\   
        4 & 99.11\% & 98.23\% & 99.32\% & 98.28\% \\  
        6 &  99.58\% & 98.10\% & 99.41\% & 98.37\% \\ 
        8 & 99.80\% & 98.26\% & 99.39\% & 98.38\% \\ 
        \hline
    \end{tabular}
	\end{center}
\end{table}

\subsection{Gradient analysis}

\begin{figure*}[t]
	\begin{subfigure}{0.5\linewidth}
		\centering
		\includegraphics[width=\linewidth]{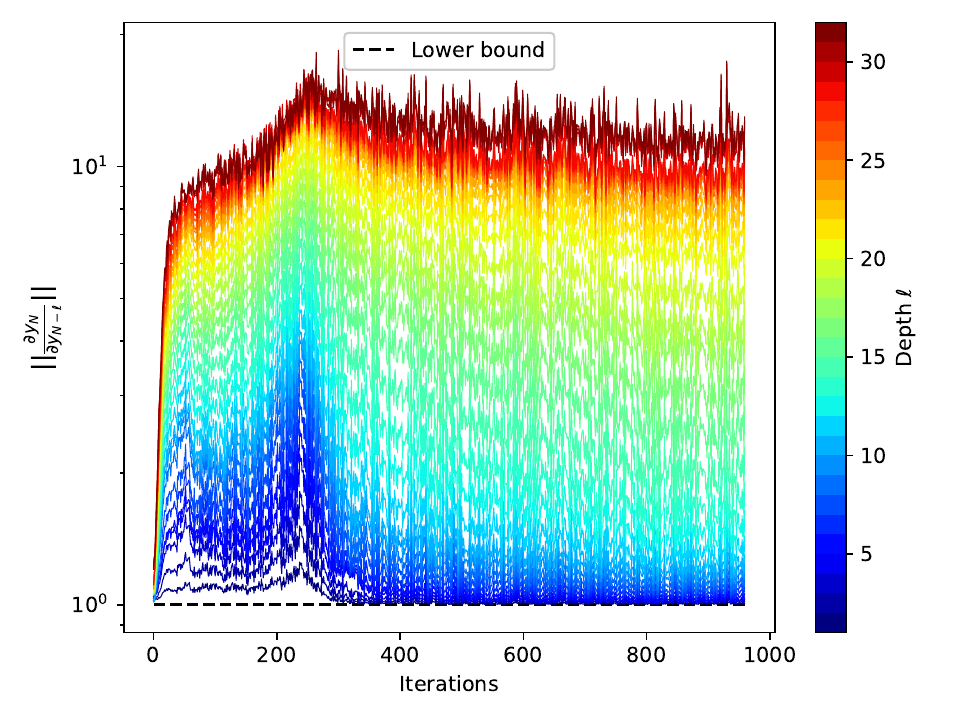}
		\caption{}
		\label{fig:gradients_a}
	\end{subfigure}%
	\begin{subfigure}{0.5\linewidth}
		\centering
		\begin{minipage}{\linewidth}
			\centering
			\includegraphics[width=\linewidth]{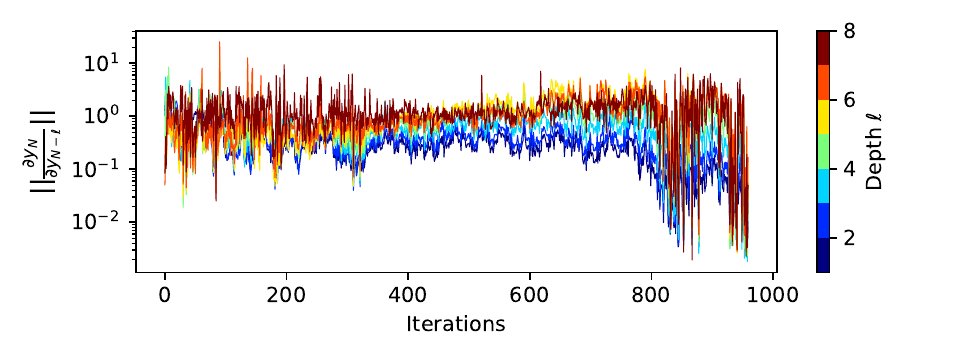}
		\end{minipage}
	
		\begin{minipage}{\linewidth}
			\centering
			\includegraphics[width=\linewidth]{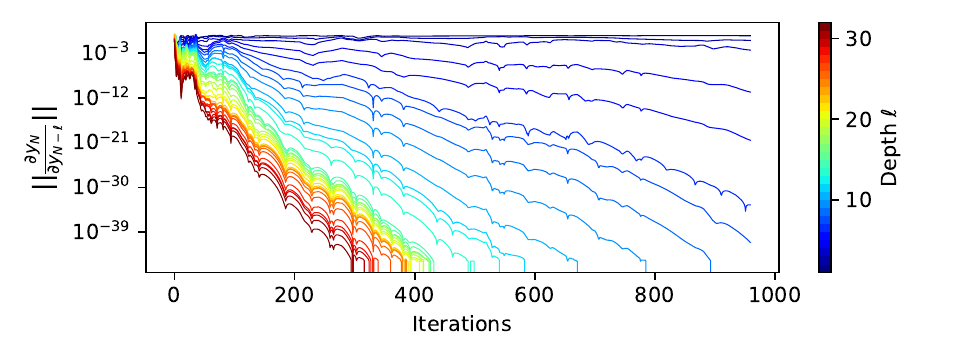}
		\end{minipage}%
	\caption{}
	\label{fig:gradients_b}
	\end{subfigure}%
	\caption{Evolution of the 2-norm of the BSM 
	during the training of (a) a 32-layer H$_2$-DNN and (b) a multilayer perceptron network with 8 (above) and 32 (below) layers.}
	\label{fig:gradients}
\end{figure*}

Our next aim is to provide a numerical validation of the main property of H-DNNs: the absence of vanishing gradients.
We consider the ``Double moons" dataset shown in Figure~\ref{fig:examplesClassification_doublemoons} and analyze the norm of BSMs during the training of an H$_2$-DNN and a fully connected MLP network.\footnote{The layer equation of the MLP network is given by ${\bf y}_{j+1} = \sigma({\bf K}_j {\bf y}_j + {\bf b}_j)$, with activation function $\sigma(\cdot) = \tanh(\cdot)$ and trainable parameters ${\bf K}_j$ and ${\bf b}_j$. We use the same implementation as described in Appendix~\ref{ap:implementation_2d}.}

Figure~\ref{fig:gradients_a} displays the BSM norms during the 960 iterations of the training phase of a 32-layer H$_2$-DNN and Figure~\ref{fig:gradients_b} presents the same quantities for an MLP network with 8 and 32 layers. 
While the H$_2$-DNN and the 8-layer MLP network achieve good performance at the end of the training ($100\%$ and $99.7\%$ accuracy over the test set, respectively), the 32-layer MLP network fails to classify the input data ($50\%$ accuracy). 

Figure~\ref{fig:gradients_a} validates the results of Theorem~\ref{th:nonvanishing_gradients_DT} since no BSM norm is smaller than 1 at any time.\footnote{We highlight that 
$\mathcal{L}(\boldsymbol{\theta})$ 
achieves a stationary point $\boldsymbol{\theta}^\star$ 
since 
$\nabla_{\boldsymbol{\theta}}\mathcal{L}(\boldsymbol{\theta}^\star) = 0$
in approximately 350 iterations,
obtaining a $100\%$ accuracy over the test set.}
However, this is not the case for MLP networks where gradient norms can be very small and may prevent the network to succeed in the training.
Note that the main inconvenient for the 32-layer MLP network to achieve good performance is that after 400 iterations, only a few layers still show a gradient norm different from zero i.e. $\norm{\frac{\partial {\bf y}_N}{\partial {\bf y}_{N-\ell}}} \approx 0$ for $\ell=7,\dots, 32$.

In addition, it can be seen from Figure~\ref{fig:gradients_a} that the BSM norms do not explode and remain bounded as
\begin{equation*}
    1\leq \norm{\frac{\partial \mathbf{y}_N}{\partial \mathbf{y}_{N-\ell}}} \leq 11\,, \quad \forall \ell =0,\dots,N-1\,.
\end{equation*}
This is in line with Proposition~\ref{lem:upper_bound}, where we show that, at each iteration of the training phase, BSM norms are upper-bounded by a quantity depending on the network depth and the parameters $\boldsymbol{\theta}_{0,\dots,N-1}$ of that specific iteration.

\section{Conclusion}\label{sec:Conclusion}

In this paper, we proposed a class of H-DNNs obtained from the time discretization of Hamiltonian dynamics.
We proved that H-DNNs stemming from S-IE discretization do not suffer from vanishing gradients and also provided methods to control the growth of associated BSMs.
These results are obtained by combining concepts from system theory, as per Hamiltonian systems modeling, and discretization methods developed in numerical analysis.
We further derived sufficient structural conditions on weight matrices to enable distributed learning of H-DNNs.
Although we limited our analysis to S-IE discretization, one can leverage the rich literature on symplectic integration~\cite{Hairer2006book} for defining even broader classes of H-DNNs. This avenue will be explored in future research. 
It is also relevant to study the application of H-DNNs to optimal control problems — we refer the interested reader to~\cite{Furieri22a} for preliminary results in this direction.

\bibliographystyle{IEEEtran}
\bibliography{references}

\appendices
\section{Relationship of H-DNNs with existing architectures}
\label{ap:relation_existing_arq}

We introduce existing architectures proposed in \cite{Haber_2017, Chang19} and \cite{Chang18a}, and show how they can be encompassed in our framework. Since, for constant weights across layers, these architectures stem from the discretization of marginally stable systems, we call them  MS$_i$-DNN ($i=1,2,3$).

\noindent\textbf{MS$_1$-DNN:} In \cite{Haber_2017}, the authors propose to use Verlet integration method to discretize
$$
\begin{bmatrix}
\dot{\bf p}(t) \\
\dot{\bf q}(t)
\end{bmatrix}
\hspace{-0.1cm}
=
\sigma \left(
\begin{bmatrix}
0_\frac{n}{2} & {\bf K}_0(t)\\
-{\bf K}_0^\top(t) & 0_\frac{n}{2}
\end{bmatrix}
\begin{bmatrix}
{\bf p}(t) \\
{\bf q}(t) 
\end{bmatrix}
+\begin{bmatrix}
{\bf b}_1(t) \\
{\bf b}_2(t) 
\end{bmatrix}
\right)\,,
$$
where ${\bf p}, {\bf q} \in \mathbb{R}^{\frac{n}{2}}$,
obtaining the layer equations
\begin{equation}
\begin{cases}
    {\bf q}_{j+1} = {\bf q}_j - h \sigma({\bf K}_{0,j}^\top{\bf p}_j + {\bf b}_{j,1})\,,\\
    {\bf p}_{j+1} = {\bf p}_j + h \sigma({\bf K}_{0,j}{\bf q}_{j+1} + {\bf b}_{j,2})\,.
\label{eq:MS1}
\end{cases}
\end{equation}
Note that \eqref{eq:MS1} is an instance of H$_2$-DNN when
assuming ${\bf K}_j$ to be invertible 
and 
setting
${\bf J}_j  {\bf K}^\top_j = I_n$, and 
${\bf K}_j = 
\begin{bsmallmatrix}
\bf 0 & {\bf K}_{0,j}\\
-{\bf K}_{0,j}^\top & \bf 0
\end{bsmallmatrix}
$ for all $j=0,\dots,N-1$.

\noindent\textbf{MS$_2$-DNN:} In \cite{Haber_2017,Chang19}, the authors propose to use FE to discretize
$$\dot{\bf y}(t) = \sigma({\bf K} (t) {\bf y}(t) + {\bf b} (t))\,,$$
where ${\bf K}(t)$ is skew-symmetric for all $ t\in [0,T]$, obtaining
the following layer equation 
\begin{equation}
{\bf y}_{j+1} = {\bf y}_j + h \sigma({\bf K}_j{\bf y}_{j} + {\bf b}_{j})\,.
\label{eq:MS2}
\end{equation}
In this case, \eqref{eq:MS2} is an instance of H$_1$-DNN by 
assuming ${\bf K}_j$ is invertible 
and by 
setting
${\bf J}_j {\bf K}^\top_j = I_n$ and 
${\bf K}_j = -{\bf K}^\top_j$ for all $j=0,\dots,N-1$.

\noindent\textbf{MS$_3$-DNN:} In \cite{Chang18a}, the authors propose to use Verlet integration method to discretize
\begin{align*}
\begin{bmatrix}
\dot{\bf p}(t) \\
\dot{\bf q}(t)
\end{bmatrix}&=
\begin{bmatrix}
{\bf K}_1^\top (t)& 0_\frac{n}{2} \\
0_\frac{n}{2} & -{\bf K}_2^\top(t)
\end{bmatrix} \times\\
&
\times \sigma \left(
\begin{bmatrix}
0_\frac{n}{2} & {\bf K}_1(t)\\
{\bf K}_2 (t)& 0_\frac{n}{2}
\end{bmatrix} 
\begin{bmatrix}
{\bf p}(t) \\
{\bf q} (t)
\end{bmatrix}
+\begin{bmatrix}
{\bf b}_1 (t)\\
{\bf b}_2 (t)
\end{bmatrix}
\right)\,,
\end{align*}
where ${\bf p}, {\bf q} \in \mathbb{R}^{\frac{n}{2}}$,
obtaining the layer equations
\begin{equation}
\begin{cases}
{\bf p}_{j+1} = {\bf p}_j + h {\bf K}_{1,j}^\top \sigma({\bf K}_{1,j} {\bf q}_j + {\bf b}_{j,1})\,,\\
{\bf q}_{j+1} = {\bf q}_j - h {\bf K}_{2,j}^\top \sigma({\bf K}_{2,j} {\bf p}_{j+1} + {\bf b}_{j,2})\,.
\label{eq:MS3}
\end{cases}
\end{equation}
Note that \eqref{eq:MS3} is an instance of H$_2$-DNN when
setting
$${\bf K}_j = 
\begin{bmatrix}
0_\frac{n}{2} & {\bf K}_{1,j}\\
{\bf K}_{2,j} & 0_\frac{n}{2}
\end{bmatrix}
\,\text{ and }\,
{\bf J}_j  =
\begin{bmatrix}
0_{\frac{n}{2}} & I_{\frac{n}{2}}\\
-I_{\frac{n}{2}} & 0_{\frac{n}{2}}
\end{bmatrix}.$$

In \cite{Haber_2017} and \cite{Chang18a}, the MS$_1$- and MS$_3$-DNNs have been called \textit{Hamiltonian-inspired} in view of their similarities with Hamiltonian models,
although a precise Hamiltonian function for the corresponding ODE has not been provided. Moreover, note that the Verlet discretization used coincides with S-IE.

We highlight that a necessary condition for the skew-symmetric $n \times n$ matrix ${\bf K}_j$ to be invertible is that the size $n$ of input features is even.\footnote{For a $n\times n$ skew-symmetric matrix ${\bf A}$ we have, $\det({\bf A}) = \det({\bf A}^\top) = \det({\bf A}^{-1}) = (-1)^n \det({\bf A})$. If $n$ is odd, then $\det({\bf A}) = - \det({\bf A}) = 0$. Thus, ${\bf A}$ is not invertible.}
If $n$ is odd, however, one can perform input-feature augmentation by adding an extra state initialized at zero to satisfy the previous condition \cite{dupont2019augmented}.

\section{Proofs}
\subsection{Proof of Lemma~\ref{lem:grad_dynamics}}
\label{sec:lem:grad_dynamics}
Given the ODE~\eqref{eq:firstorderODE} with ${\bf y}(0) = {\bf y}_0$
we want to calculate the dynamics of $\frac{\partial {\bf y}(T)}{\partial {\bf y}(T-t)}$.

The solution to \eqref{eq:firstorderODE} can be expressed as
\begin{equation}
{\bf y}(t)={\bf y}(0) + \int_0^t {\bf f}({\bf y}(\tau), \boldsymbol{\theta}(\tau)) \,d \tau\,.
\label{eq:ap_solution}
\end{equation}
Analogously to Lemma~2 in \cite{ClaraL4DC}, evaluating \eqref{eq:ap_solution} in $t=T$ and $t=T-t-\delta$ and taking the limit of their ratio as $\delta\rightarrow 0$, we obtain
\begin{align} 
\frac{d}{d t} \frac{\partial {\bf y}(T)}{\partial {\bf y}(T-t)} = \frac{\partial {\bf f}}{\partial {\bf y}}\Big\rvert_{{\bf y}(T-t), \boldsymbol{\theta}(T-t)} \frac{\partial {\bf y}(T)}{\partial {\bf y}(T-t)}\,.
\end{align}

Since in our case ${\bf f}({\bf y}(t), \boldsymbol{\theta}(t)) = {\bf J}(t) {\bf K}(t)^\top \sigma( {\bf K}(t) {\bf y}(t) + {\bf b}(t) ) $, then, dropping the time-dependence for brevity, we have
\begin{align*}
\left. \frac{\partial {\bf f}}{\partial {\bf y}} \right\rvert_{{\bf y},\boldsymbol{\theta}} &=
\frac{\partial}{\partial {\bf y}} \left(  {\bf K}^\top \sigma({\bf K} {\bf y} +{\bf b})\right) {\bf J}^\top\\
&= \frac{\partial}{\partial {\bf y}} \left( \sigma({\bf K} {\bf y} +{\bf b})\right) {\bf K}{\bf J}^\top \nonumber \\
&= {\bf K}^\top \text{diag}\left(\sigma'({\bf K} {\bf y} +{\bf b})\right) {\bf K}{\bf J}^\top \\
&= {\bf K}^\top{\bf D}({\bf y}) {\bf K}{\bf J}^\top.
\end{align*}

 \subsection{Proof of Lemma~\ref{le:periodic}}
 \label{sec:le:periodic}
 
 To shorten the notation, let $\mathbf{s}(t,0,\mathbf{y}_0+\gamma \bm{\beta})=s_{\gamma,\beta}(t)$. We define the polar coordinates
\begin{align*}
    &\rho_{\gamma,\beta}(t) = \norm{s_{\gamma,\beta}(t)}_2\,, \quad \phi_{\gamma,\beta}(t) = \arctan\left(\frac{s_{\gamma,\beta,2}(t)}{s_{\gamma,\beta,1}(t)}\right)\,,
\end{align*}
 where $s_{\gamma,\beta,i}$ denotes the $i$-th entry of $s_{\gamma,\beta}$. To further streamline the notation, throughout this proof we denote $s_2(t)= s_{\gamma,\beta,2}(t)$, $s_1(t)= s_{\gamma,\beta,1}(t)$ and $\mathbf{s}(t) = s_{\gamma,\beta}(t)$.

$i)$  Assuming $\epsilon>0$ without loss of generality,
\begin{align}
    \dot{\phi}_{\gamma,\beta}(t) &= \frac{1}{1+\left(\frac{s_2}{s_1}\right)^2}\frac{\dot{s}_2 s_1-\dot{s}_1 s_2}{s_1^2}\nonumber\\
    &= \frac{\epsilon}{\rho_{\gamma,\beta}^2(t)} \cdot \mathbf{s}^\mathsf{T}\tanh(\mathbf{s})>0\,, \label{eq:phi_dot}
\end{align}
where the last equality holds because $\tanh(\cdot)$ preserves the sign. Since $\phi_{\gamma,\beta}(t)$ represents an angle in the Cartesian plane, the vector $\mathbf{s}(t)$ always rotates counter-clockwise around the origin. We conclude that there exists an instant $P_{\gamma,\beta} \in \mathbb{R}$ such that for all $t \in \mathbb{R}$
\begin{equation*}
    \phi_{\gamma,\beta}(P_{\gamma,\beta}+t) =\phi_{\gamma,\beta}(t)+2 \pi\,. 
\end{equation*}
Next, we prove $\rho_{\gamma,\beta}(t)=\rho_{\gamma,\beta}(P_{\gamma,\beta}+t)$ for all $t \in \mathbb{R}$. Let $H(t) = H (\mathbf{s}(t))$. Since $H(t)$ is constant for every $t \in \mathbb{R}$ because the ODE \eqref{eq:2D_Example} is time-invariant, we have 
\begin{align*}
    &H(0) = \log\cosh^\top(\mathbf{s}(t))1_2\\
    &=\log\cosh^\top\left(\rho_{\gamma,\beta}(t)\begin{bmatrix}\cos\phi_{\gamma,\beta}(t)\\\sin \phi_{\gamma,\beta}(t)\end{bmatrix}\right)1_2\\
    &= H(P_{\gamma,\beta}+t)\\
    &= \log\cosh^\top\left(\rho_{\gamma,\beta}(P_{\gamma,\beta}+t)\begin{bmatrix}\cos\phi_{\gamma,\beta}(P_{\gamma,\beta}+t)\\\sin \phi_{\gamma,\beta}(P_{\gamma,\beta}+t)\end{bmatrix}\right)1_2\\
    &= \log\cosh^\top\left(\rho_{\gamma,\beta}(P_{\gamma,\beta}+t)\begin{bmatrix}\cos\phi_{\gamma,\beta}(t)\\\sin \phi_{\gamma,\beta}(t)\end{bmatrix}\right)1_2\,.
\end{align*}

Let $\mathbf{v}= \begin{bmatrix}v_1\\v_2\end{bmatrix} = \begin{bmatrix}\cos\phi_{\gamma,\beta}(t)\\\sin \phi_{\gamma,\beta}(t)\end{bmatrix}$. We show that
\begin{align*}
    &\log\cosh(\rho_{\gamma,\beta}(t)\mathbf{v})^\mathsf{T}1_2 = \log\cosh(\rho(P_{\gamma,\beta}+t)\mathbf{v})^\mathsf{T}1_2 \\
    &\implies \rho_{\gamma,\beta}(t) = \rho_{\gamma,\beta}(P_{\gamma,\beta}+t)\,.
\end{align*}

Assume by contrapositive that $\rho(t)\neq \rho(P_{\gamma,\beta}+t)$, and without loss of generality that $\rho(P_{\gamma,\beta}+t)>\rho(t)>0$. The function $\log\cosh(x)$ is strictly monotonically decreasing for $x<0$, strictly monotonically increasing for $x>0$, and $\log\cosh(0) = 0$. Therefore
\begin{align*}\
   & \log\cosh(\rho_{\gamma,\beta}(t)v_1)+ \log\cosh(\rho_{\gamma,\beta}(t)v_2)<\\
   &<\log\cosh(\rho(P_{\gamma,\beta}+t)v_1)+ \log\cosh(\rho(P_{\gamma,\beta}+t)v_2)\,. 
\end{align*}
The above inequality holds true because $\log\cosh(\rho_{\gamma,\beta}(t)v_1)<\log\cosh(\rho_{\gamma,\beta}(P_{\gamma,\beta}+t)v_1)$ and $\log\cosh(\rho_{\gamma,\beta}(t)v_2)<\log\cosh(\rho_{\gamma,\beta}(P_{\gamma,\beta}+t)v_2)$, for any value of $x_1 \in \mathbb{R}$ and $x_2 \in \mathbb{R}$. Hence, we have reached a contradiction, and we deduce $\rho_{\gamma,\beta}(t)=\rho_{\gamma,\beta}(P_{\gamma,\beta}+t)$. We conclude that $\mathbf{y}(t) = \mathbf{y}(P_{\gamma,\beta}+t)$. Since the ODE \eqref{eq:2D_Example} is time-invariant, then $\mathbf{y}(kP_{\gamma,\beta}+t) = \mathbf{y}(t)$ for every $k \in \mathbb{N}$.

\vspace{0.2cm}

$ii)$ For any orientation of the Cartesian axes, let $t^\star$ and $t_\gamma^\star$ be time instants when $\phi_{0,\beta}(t^\star) =\phi_{\gamma,\beta}(t_\gamma^\star) = 0$. By \eqref{eq:phi_dot}
\begin{align*}
    \dot{\phi}_{\gamma,\beta}(t_\gamma^\star) = \frac{\epsilon s^\top(t_\gamma^\star) \tanh(t_\gamma^\star))}{\rho_{\gamma,\beta}^2(t_\gamma^\star)} =\frac{\epsilon \tanh(\rho_{\gamma,\beta}(t^\star)+h_{\gamma,\beta})}{\rho_{0,\beta}(t^\star)+h_{\gamma,\beta}}\,,
\end{align*}
where $h:\mathbb{R}_0^+\rightarrow \mathbb{R}_0^+$ is defined as $h_\beta(\gamma) = \rho_{\gamma,\beta}(t_\gamma^\star)-\rho_{0,\beta}(t^\star)$. We assume that $h_\beta(\cdot)$ increases with $\gamma>0$, that is, we assume that $\bm{\beta}$ points towards a higher sublevel set of the Hamiltonian; analogous reasoning holds if $\bm{\beta}$ points towards a lower sublevel set.  We have that $h_\beta(\gamma)$ is a continuous-function with $h_\beta(0) = 0$ and $h_\beta'(0)>0$.  Then
\begin{align*}
   &\frac{\partial}{\partial \gamma} \dot{\phi}_{\gamma,\beta}(t_\gamma^\star) =\\
    &~~\frac{\epsilon\Big(\sech^2(\rho_{0,\beta}(t^\star)+h_\beta(\gamma))h_\beta'(\gamma)(\rho_{0,\beta}(t^\star)+h_\beta(\gamma))-}{(\rho_{0,\beta}(t^\star)+h_\beta(\gamma))^2}\\
    &~~\frac{-h_\beta'(\gamma)\tanh(\rho_{0,\beta}(t^\star)+h_\beta(\gamma))\Big)}{(\rho_{0,\beta}(t^\star)+h_\beta(\gamma))^2}\,.
\end{align*}
The above evaluated at $\gamma = 0$ yields
\begin{align*}
    &\frac{\partial}{\partial \gamma} \dot{\phi}_{\gamma,\beta}(t^\star)\left.\right \rvert_{\gamma = 0} =\\
    &~~\epsilon h_\beta'(0)\frac{\sech^2(\rho_{0,\beta}(t^\star))\rho_{0,\beta}(t^\star)-\tanh(\rho_{0,\beta}(t^\star))}{\rho_{0,\beta}^2(t^\star)}\,.
\end{align*}
It can be verified that the value $\sech^2(x)x-\tanh(x)$ is negative for every $x>0$. We conclude that the angular time derivative decreases as the perturbation $\gamma$ increases. Since the above holds for any orientation of the Cartesian axes, we conclude that the period increases as $\gamma$ increases. Analogous reasoning holds if $\bm{\beta}$ points towards a lower sublevel set; in this case, the period decreases as $\gamma$ increases. 

\subsection{Proof of Proposition~\ref{lem:upper_bound}}
\label{sec:lem:upperbound}
We will first state the following Lemma needed for proving Proposition~\ref{lem:upper_bound}.

\begin{lemma}
	\label{lem:ap_norm_columns_to_matrix}
	Consider the matrix $\mathbf{A}\in \mathbb{R}^{n \times n}$ with columns $\mathbf{a}_j\in \mathbb{R}^n$ for $j=1,\dots,n$ (i.e., $\mathbf{A} = \begin{bmatrix}
	\mathbf{a}_1 & \mathbf{a}_2 & \dots & \mathbf{a}_n
	\end{bmatrix}$) and $\norm{\mathbf{a}_j}_2 \leq \gamma^+$ for all $j = 1,\dots,n$. Then, $\norm{\mathbf{A}}_2 \leq \gamma^+\sqrt{n}$.
\end{lemma}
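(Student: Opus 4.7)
The plan is to bound the spectral norm of $\mathbf{A}$ by its Frobenius norm, and then to express the Frobenius norm in terms of the Euclidean norms of its columns, at which point the uniform bound $\gamma^+$ can be plugged in directly.

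Concretely, I would first invoke the standard inequality $\|\mathbf{A}\|_2 \leq \|\mathbf{A}\|_F$, which holds because the spectral norm is the largest singular value while $\|\mathbf{A}\|_F^2$ equals the sum of the squared singular values. Next, I would use the column-wise identity $\|\mathbf{A}\|_F^2 = \sum_{j=1}^n \|\mathbf{a}_j\|_2^2$, which follows from the definition $\|\mathbf{A}\|_F^2 = \sum_{i,j} \mathbf{A}_{ij}^2$ after rearranging the double sum. Substituting the hypothesis $\|\mathbf{a}_j\|_2 \leq \gamma^+$ for each $j=1,\dots,n$, one gets $\|\mathbf{A}\|_F^2 \leq n (\gamma^+)^2$, and taking square roots yields the desired $\|\mathbf{A}\|_2 \leq \gamma^+ \sqrt{n}$.

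There is no substantive obstacle here: the proof is a two-line chain of classical inequalities, and the factor $\sqrt{n}$ that appears in the statement of Proposition~\ref{lem:upper_bound} is exactly accounted for by passing through the Frobenius norm. If a more self-contained argument is preferred, one could alternatively bound $\|\mathbf{A}\mathbf{x}\|_2$ for an arbitrary unit vector $\mathbf{x}$ as $\|\sum_j x_j \mathbf{a}_j\|_2 \leq \sum_j |x_j| \|\mathbf{a}_j\|_2 \leq \gamma^+ \|\mathbf{x}\|_1 \leq \gamma^+ \sqrt{n} \|\mathbf{x}\|_2$ via Cauchy--Schwarz, arriving at the same conclusion without invoking the Frobenius norm explicitly.
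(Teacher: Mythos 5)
Your argument is correct. Your primary route --- passing through the Frobenius norm via $\norm{\mathbf{A}}_2 \leq \norm{\mathbf{A}}_F$ and the column-wise identity $\norm{\mathbf{A}}_F^2 = \sum_{j=1}^n \norm{\mathbf{a}_j}_2^2$ --- is genuinely different from the paper's proof, which instead bounds $\norm{\mathbf{A}\mathbf{x}}_2 = \norm{\sum_j x_j \mathbf{a}_j}_2 \leq \gamma^+ \norm{\mathbf{x}}_1$ by the triangle inequality and then uses $\norm{\mathbf{x}}_1 \leq \sqrt{n}\,\norm{\mathbf{x}}_2$ (Cauchy--Schwarz) before taking the supremum in the operator-norm definition; that is exactly the ``more self-contained'' alternative you sketch in your last sentence. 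The two routes yield the same constant $\sqrt{n}$ and neither is sharper here. What the Frobenius argument buys is brevity, at the cost of invoking the (standard but nontrivial) fact that the spectral norm is dominated by the Frobenius norm; the paper's argument buys self-containedness, using only the triangle inequality, the equivalence of the $\ell_1$ and $\ell_2$ vector norms, and the definition of the induced $2$-norm. Either version is a complete and valid proof of the lemma.
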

\begin{proof}
	Consider $\mathbf{x} \in \mathbb{R}^n$ where the $i$-th entry of $\mathbf{x}$ is denoted as $x_i$ for $i = 1,\dots,n$. Since $\mathbf{A}\mathbf{x} = \sum_{i=1}^{n} x_i \mathbf{a}_i $, then,
	\begin{align}
          \norm{\mathbf{A}\mathbf{x}}_2& = \norm{\sum_{i=1}^{n} x_i \mathbf{a}_i}_2
                     \leq \sum_{i=1}^{n}  \norm{x_i \mathbf{a}_i}_2 \nonumber \\
                     &= \sum_{i=1}^{n}  \lvert x_i \rvert  \norm{\mathbf{a}_i}_2 
                      \leq \gamma^+ \sum_{i=1}^{n} \lvert x_i \rvert 
                       = \gamma^+ \norm{\mathbf{x}}_1\,.
	\label{eq:prop3_eq1}
	\end{align}
	
	It is easy to prove using the Cauchy-Schwarz inequality that $\frac{1}{\sqrt{n}} \norm{\mathbf{x}}_1 \leq \norm{\mathbf{x}}_2$.
	\label{eq:prop3_eq2}
	Then, we conclude:
	{
	\begin{multline*}
	\norm{\mathbf{A}}_2 = \sup_{\mathbf{x} \in \mathbb{R}^n} \frac{\norm{\mathbf{Ax}}_2}{\norm{\mathbf{x}}_2} \leq 
	\sup_{\mathbf{x} \in \mathbb{R}^n} \frac{ \gamma^+ \norm{\mathbf{x}}_1 }{ \frac{1}{\sqrt{n}} \norm{\mathbf{x}}_1 } \\
	= \sup_{\mathbf{x} \in \mathbb{R}^n}  \gamma^+ \sqrt{n}
	= \gamma^+ \sqrt{n}\,.
	\end{multline*}
	}
\end{proof}
We are now ready to prove Proposition~\ref{lem:upper_bound}.

\begin{proof}
Consider the backward gradient dynamics ODE \eqref{eq:BackwardGradientDynamics} and subdivide $\frac{\partial {\bf y}(T)}{\partial {\bf y}(T-t)}$ into columns as per  $\frac{\partial {\bf y}(T)}{\partial {\bf y}(T-t)} = \begin{bmatrix}{\bf z}_1(t) & {\bf z}_2(t) & \dots & {\bf z}_n(t)
\end{bmatrix}$.
Then, \eqref{eq:BackwardGradientDynamics} is equivalent to
\begin{equation}
\dot{\bf z}_i(t) = \mathbf{A}(T-t) {\bf z}_i(t)\,, \quad t \in [0,T]\,,
\label{eq:ap_BackwardGradientDynamics_z}
\end{equation}
for $ i=1,2\dots,n$ and $ {\bf z}_i(0) = e_i$, where $e_i$ is the unit vector with a single nonzero entry with value 1 (i.e. $e_1 = \begin{bmatrix}
	1 & 0 & \dots & 0
\end{bmatrix}^\top$).
The solution of \eqref{eq:ap_BackwardGradientDynamics_z} is given by
\begin{equation}
{\bf z}_i(t) = {\bf z}_i(0) + \int_{0}^{t} \mathbf{A}(T-\tau) {\bf z}_i(\tau) d \tau\,.
\label{eq:ap_z_solution}
\end{equation}
Assuming that 
$\|\mathbf{A}(T-t)\|_2 \leq Q$
for all $t \in [0,T]$, and applying the triangular inequality in \eqref{eq:ap_z_solution},  we obtain
\begin{align*}
\|{\bf z}_i(t)\|_2 &\leq \|{\bf z}_i(0)\|_2 + Q \int_{0}^{t} \|{\bf z}_i(\tau)\|_2 d \tau \\
&= 1 + Q \int_{0}^{t} \|{\bf z}_i(\tau)\|_2 d \tau\,,
\end{align*}
where the last equality comes from the fact that $\|{\bf z}_i(0)\|_2 = \|e_i\|_2=1$ for all $i=1,2,\dots,n$.
Then, applying Gronwall inequality, we have
\begin{equation}
\|{\bf z}_i(t)\|_2 \leq \exp(Q T)\,.
\label{eq:ap_z_exp}
\end{equation}
To characterize $Q$, note that
\begin{align*}
  &\|\mathbf{A}(T,T-t)\|_2 \\
  &=\| \mathbf{K}^\top(T-t) \mathbf{D}({\bf y}(T-t)) \mathbf{K}(T-t) \mathbf{J}^\top(T-t) \|_2\\
 &  \leq \hspace{-0.1cm} \| \mathbf{K}^\top(T-t) \|_2 \| \mathbf{D}({\bf y}(T-t)) \|_2 \| \mathbf{K}(T-t) \|_2 \| \mathbf{J}^\top(T-t) \|_2 \\
  &\leq \| \mathbf{K}^\top(T-t) \|_2^2\,  \| \mathbf{J}^\top(T-t) \|_2 \, S\sqrt{n}\,.
\end{align*}
The last inequality is obtained by applying Lemma~\ref{lem:ap_norm_columns_to_matrix} to $\mathbf{D}$ and noticing that each column of $\mathbf{D}$ is expressed as ${\bf d}_i = {e}_i \sigma'({\bf K}{\bf y}+{\bf b})_i$, where  $|\sigma'(x)|\leq S$ for every $x \in \mathbb{R}$ by assumption. Hence, we can characterize $Q$ as
\begin{equation*}
    Q = S\sqrt{n} \max_{t \in [0,T]} \| \mathbf{K}^\top(T-t) \|_2^2\,  \| \mathbf{J}^\top(T-t) \|_2\,.
\end{equation*}
Last, having bounded the column vectors ${\bf z}_i$, \eqref{eq:ap_z_exp}, we apply  Lemma~\ref{lem:ap_norm_columns_to_matrix} to conclude the proof.
\end{proof}

\subsection{Proof of Theorem~\ref{th:multi_agent}}
\label{sec:th:multi_agent}
Define 
\begin{align*}
\mathbf{v}^{[i]}(t) &= \sigma\left(\sum_{k|\mathbf{R}(i,k)= 1}\mathbf{K}^{i,k}(t)\mathbf{y}^{[k]}(t)+\mathbf{b}^{[i]}(t)\right)\,,\\
\mathbf{w}^{[i]}(t) &= \hspace{-0.5cm} \sum_{k|\mathbf{R}^\top(i,k)= 1}\hspace{-0.3cm}(\mathbf{K}^\top)^{i,k}(t) \mathbf{v}^{[k]}(t)\,, \\
\mathbf{z}^{[i]}(t) &= \hspace{-0.5cm} \sum_{k|\mathbf{T}(i,k) = 1} \hspace{-0.3cm}\mathbf{J}^{i,k}(t)\mathbf{w}^{[k]}(t)\,. 
\end{align*}
It is easy to verify that $\dot{\mathbf{y}}^{[i]}(t) = \mathbf{z}^{[i]}(t)$, where $\dot{\mathbf{y}}^{[i]}(t)$ indicates the forward H-DNN propagation update \eqref{eq:ODE_H} for node $i$.  
Next, observe that $\mathbf{v}^{[i]}(t)$ may depend on $\mathbf{y}^{[k]}(t)$ if and only if $\mathbf{R}(i,k) = 1$, $\mathbf{w}^{[i]}(t)$ may depend on $\mathbf{v}^{[l]}(t)$ if and only if $\mathbf{R}^\mathsf{T}(i,l) = 1$, and $\mathbf{z}^{[i]}(t)$ may depend on $\mathbf{w}^{[m]}(t)$ if and only $\mathbf{T}(i,m) = 1$. 
By dropping the time dependence to ease the notation, we deduce that $\dot{\mathbf{y}}^{[i]}(t)$ may depend on $\mathbf{y}^{[h]}(t)$ if and only if there exists two indices 
$r,s \in \{1,\dots, M\}$ such that $\mathbf{T}(i,r)=\mathbf{R}^\top(r,s) = \mathbf{R}(s,h) = 1$, or equivalently $(\mathbf{TR}^\top\mathbf{R})(i,h) = 1 $. We conclude that the forward propagation can be implemented in a localized way according to the graph $\mathbf{S}$ if $\mathbf{T}(t)\mathbf{R}(t)^\mathsf{T}\mathbf{R}(t) \leq \mathbf{S}$ holds. 

Similar reasoning holds for the backward propagation. Notice that the sparsity pattern of  $\mathbf{J}^\top(t)$ is the same as that of $\mathbf{J}(t)$ by skew-symmetricity. Define
\begin{equation*}
\begin{cases}
\mathbf{v}^{[i]}(T-t)= \sum_{k|\mathbf{T}(i,k)=1}(\mathbf{J}^\top)^{i,k}(T-t)\bm{\delta}^{[k]}(T-t)\,,\\
\mathbf{w}^{[i]}(T-t) = \sum_{k|\mathbf{R}(i,k)=1}\mathbf{K}^{i,k}(T-t)\mathbf{v}^{[k]}(T-t)\,,\\
   \mathbf{z}^{[i]}(T\hspace{-0.08cm}-\hspace{-0.08cm}t) =\\
   =\text{diag}\hspace{-0.1cm}\left(\hspace{-0.1cm}\sigma'\hspace{-0.1cm}\left(\sum\limits_{k|\mathbf{R}(i,k)=1}\hspace{-0.4cm}\mathbf{K}^{i,k}(T\hspace{-0.08cm}-\hspace{-0.08cm}t)\mathbf{y}^{[k]}(T\hspace{-0.08cm}-\hspace{-0.08cm}t)\text{+}\mathbf{b}^{[i]}(T\hspace{-0.08cm}-\hspace{-0.08cm}t)\hspace{-0.1cm}\right)\hspace{-0.2cm}\right)\hspace{-0.15cm} \times \\
   \qquad \qquad \quad \times \mathbf{w}^{[i]}(T-t)\,,\\
   \mathbf{u}^{[i]}(T-t) = \sum_{k|\mathbf{R}^\top(i,k) = 1}(\mathbf{K}^\top)^{i,k}(T-t)\mathbf{z}^{[k]}(T-t) \,.
   \end{cases}
\end{equation*}
Clearly, $\dot{\bm{\delta}}^{[i]}(T-t) = \mathbf{u}^{[i]}(T-t)$. Hence, $\dot{\bm{\delta}}^{[i]}(T-t)$ depends on $\mathbf{y}^{[k]}(T-t)$ if $(\mathbf{R}^\top(T-t)\mathbf{R}(T-t))(i,k) = 1$ and on $\bm{\delta}^{[l]}(T-t)$ if $(\mathbf{R}^\top(T-t) \mathbf{R}(T-t) \mathbf{T}(T-t))(i,l) = 1$. Since we have assumed that $\mathbf{T}(T-t) \geq I_M$, the sparsity of $(\mathbf{R}^\top(T-t) \mathbf{R}(T-t)$ is included in that of $\mathbf{R}^\top(T-t) \mathbf{R}(T-t) \mathbf{T}(T-t)$.

\subsection{Proof of Lemma \ref{lem:numerical_flow}}\label{ap:lem_numerical_flow}
	
We study the Hamiltonian system \eqref{eq:TV_HS} in the \textit{extended} phase space \cite{deGosson2011book}, i.e., 
we define an extended state vector $\tilde{\bf y} = ({\bf p}, {\bf q}, \varepsilon, t)$,\footnote{
Note that permuting the elements of $\tilde{\bf y}$, the state vector can be re-written as $(\tilde{\bf p}, \tilde{\bf q})$ where $\tilde{\bf p} = ({\bf p}, \varepsilon)$ and $\tilde{\bf q} = ({\bf q}, t)$.}
an extended interconnection matrix 
$\tilde{\bf J} = \begin{bsmallmatrix}
{\bf J} & 0_{n\times 2} \\
0_{2\times n}  & {\bf \Omega}
\end{bsmallmatrix}$, 
${\bf \Omega} = \begin{bsmallmatrix}
0 & -1 \\ 1 & 0
\end{bsmallmatrix}$ and
an extended Hamiltonian function $\tilde{H} = H({\bf p}, {\bf q}, t) + \varepsilon$, such that $\frac{d\varepsilon}{dt} = -\frac{dH}{dt}$. 

Note that the extended Hamiltonian system defined by $\tilde{H}$ is time-invariant by construction, i.e. $\frac{d \tilde{H}}{d {t}} = 0$. 
Then, following Theorem 3.3 in Section VI of \cite{Hairer2006book}, it can be seen that $\frac{\partial \tilde{\bf y}_{j+1}}{\partial \tilde{\bf y}_{j}}$ is a symplectic matrix with respect to $\tilde{\mathbf{J}}$, i.e. it satisfies
\begin{equation}
\label{eq:symplectic_extended}
\left[\frac{\partial \tilde{\bf y}_{j+1}}{\partial \tilde{\bf y}_{j}}\right]^\top \tilde{\bf J} \left[\frac{\partial \tilde{\bf y}_{j+1}}{\partial \tilde{\bf y}_{j}}\right] = \tilde{\bf J}\,.
\end{equation}
Next, we show that \eqref{eq:symplectic_extended} implies symplecticity for the BSM of the original time-varying system \eqref{eq:implicit_euler}. 
First, we introduce the S-IE layer equations for the extended Hamiltonian dynamics:
\begin{equation*}
\begin{cases}
{\bf p}_{j+1} = {\bf p}_{j} - h {\bf X}^\top \frac{\partial H}{\partial {\bf q}}({\bf p}_{j+1}, {\bf q}_j, t_j)\,, \\ 
{\bf q}_{j+1} = {\bf q}_{j} + h {\bf X} \frac{\partial H}{\partial {\bf p}}({\bf p}_{j+1}, {\bf q}_j, t_j)\,, \\ 
\varepsilon_{j+1} = \varepsilon_j - h \frac{\partial H}{\partial t}({\bf p}_{j+1}, {\bf q}_j, t_j)\,, \\ 
t_{j+1} = t_j + h\,.
\end{cases}
\end{equation*}
Then, differentiation with respect to $\tilde{\bf y}_j = \left( {\bf p}_j, {\bf q}_j, \varepsilon_j, t_j \right)$ yields\footnote{To improve readability, the dimension of the zero matrices has been omitted.}
{
\begin{multline}
\frac{\partial \tilde{\bf y}_{j+1}}{\partial \tilde{\bf y}_j}
\left(
I_n - h 
\begin{bmatrix}
H_{pp} & H_{qp} & 0 & H_{tp} \\
0 & 0 & 0 & 0 \\
0 & 0 & 0 & 0 \\
0 & 0 & 0 & 0 \\
\end{bmatrix}  
\begin{bmatrix}
{\bf J} & 0 \\
0  & {\bf \Omega}
\end{bmatrix}^\top 
\right)
= \\
\left(
I_n + h 
\begin{bmatrix}
0 & 0 & 0 & 0 \\
H_{pq} & H_{qq} & 0 &  H_{tq}\\
0 & 0 & 0 & 0 \\
H_{pt} & H_{qt} & 0 & H_{tt} \\
\end{bmatrix}
\begin{bmatrix}
{\bf J} & 0 \\
0  & {\bf \Omega}
\end{bmatrix}^\top
\right)\,,
\label{eq:differentation_Htv}
\end{multline}
}
where $$H_{xy} = \frac{\partial H ({\bf p}_{j+1}, {\bf q}_{j},t_{j})}{\partial x \partial y}\,,$$ and $x,y$ indicate any combination of two variables in the set $\{{\bf p},{\bf q},t\}$. 
It remains to verify that \eqref{eq:differentation_Htv} implies symplecticity of
\begin{equation*}
    \frac{\partial {\bf y}_{j+1}}{\partial {\bf y}_{j}} = \begin{bmatrix} \frac{\partial{\bf p}_{j+1}}{\partial{\bf p}_{j}} & \frac{\partial{\bf q}_{j+1}}{\partial{\bf p}_{j}} \\ \frac{\partial{\bf p}_{j+1}}{\partial{\bf q}_{j}} & \frac{\partial{\bf q}_{j+1}}{\partial{\bf q}_{j}} \end{bmatrix}\,.
\end{equation*}
By denoting $\boldsymbol{\Gamma} =  \left(I_n - h \begin{bsmallmatrix} H_{pp} & H_{qp} \\ 0 & 0 \end{bsmallmatrix}{\bf J}^\top\right)$
	and $\boldsymbol{\Lambda} =\left(I_n + h \begin{bsmallmatrix} 0 & 0 \\ H_{pq} & H_{qq} \end{bsmallmatrix}	{\bf J}^\top \right)$, where $\bm{\Gamma}$ and $\bm{\Lambda}$ are invertible for almost every choice of step size $h$, the part of \eqref{eq:differentation_Htv} concerning $\frac{\partial {\bf y}_{j+1}}{\partial {\bf y}_{j}}$ reads as
 	$$\frac{\partial {\bf y}_{j+1}}{\partial {\bf y}_{j}} \boldsymbol{\Gamma} =  \boldsymbol{\Lambda}\,.$$
The above implies
\begin{equation*}
    \frac{\partial {\bf y}_{j+1}}{\partial {\bf y}_{j}}^\top \mathbf{J} \frac{\partial {\bf y}_{j+1}}{\partial {\bf y}_{j}} = \mathbf{J} \iff  \boldsymbol{\Lambda}^\mathsf{T} {\bf J} \boldsymbol{\Lambda}  = \boldsymbol{\Gamma}^\mathsf{T} {\bf J} \boldsymbol{\Gamma}\,,
\end{equation*}
where the equality $ \boldsymbol{\Lambda} {\bf J} \boldsymbol{\Lambda}  = \boldsymbol{\Gamma} {\bf J} \boldsymbol{\Gamma}$ is  verified by direct inspection. We conclude that $\frac{\partial {\bf y}_{j+1}}{\partial {\bf y}_{j}}$ is a symplectic matrix with respect to $\mathbf{J}$.

\section{Numerical validation of the analysis of \eqref{eq:2D_Example}}
\label{sec:num_validation}
We simulate the system \eqref{eq:2D_Example} for $\epsilon = 1$. In Figure~\ref{fig:phi_TTt_trajectories} we compute the value 
\begin{equation*}\frac{\norm{\mathbf{s}(T,T-\tau,\mathbf{y}_0+\gamma \bm{\beta})-\mathbf{s}(T,T-\tau,\mathbf{y}_0)}}{\gamma}\,,
\end{equation*}
for $\gamma = 0.005$, $\bm{\beta} = \begin{bmatrix}1\\0\end{bmatrix},\begin{bmatrix}0\\1\end{bmatrix}$. This is obtained by selecting $\mathbf{y}(0)$ at random, letting $\mathbf{y}_0 = \mathbf{s}(T-t,0,\mathbf{y}(0))$ and then computing both $\mathbf{s}(T,T-t,\mathbf{y}_0 + \gamma \bm{\beta})$ and $\mathbf{s}(T,T-t,\mathbf{y}_0)$ by appropriate numerical integration of \eqref{eq:2D_Example}. 

\begin{figure}[h]
	\centering
	\includegraphics[width=0.8\linewidth]{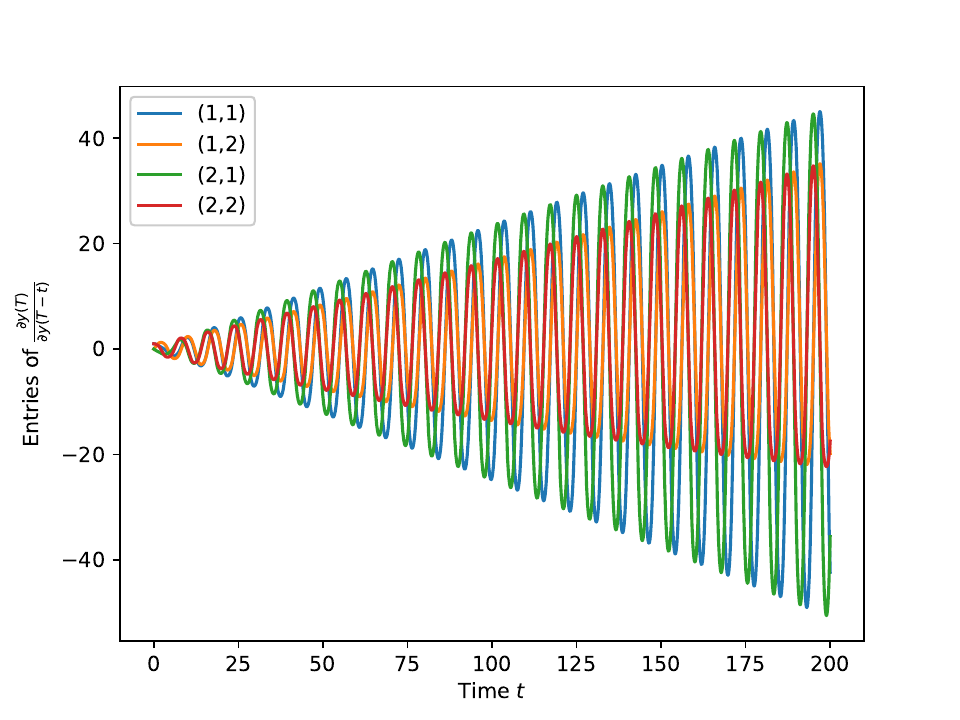}
	\caption{Temporal evolution of $\frac{\partial {\bf y}(T)}{\partial{\bf y}(T-t)}$ for \eqref{eq:2D_Example}.}
	\label{fig:phi_TTt_trajectories}
\end{figure}

The numerical experiment confirms that the entries of $\frac{\partial {\bf y}(T)}{\partial{\bf y}(T-t)}$ diverge as $t$ increases. We note that for a large enough $t$ the values reach a maximum value, as expected because $\gamma$ is not infinitesimal. One can further inspect that, when $\gamma$ is chosen closer to $0$, the maximum values achieved by the trajectory diverges. 

\section{Implementation details}

DNN architectures and training algorithms are implemented using the PyTorch library.\footnote{\url{https://pytorch.org/}}

\subsection{Binary classification datasets}\label{ap:implementation_2d}
For two-class classification problems, we use 8000 datapoints and a mini-batch size of 125, for both training and test data. 
Training is performed using coordinate gradient descent, i.e. a modified version of stochastic gradient descent (SGD) with Adam ($\beta_1 = 0.9$, $\beta_2 = 0.999$) \cite{Haber_2017}. 
Following \cite{Haber_2017}, in every iteration of the algorithm, first the optimal weights of the output layer are computed given the last updated parameters of the hidden layers, and then, a step update of the hidden layers' parameters is performed by keeping fixed the output parameters. 
The training consists of 50 epochs and each of them has a maximum of 10 iterations to compute the output layer weights. 
The learning rate, or optimization step size as per $\gamma$ in \eqref{eq:GD_update}, is set to $2.5 \times 10^{-2}$. 
For the regularization, $\alpha_\ell =0 $, the weight decay for the output layer ($\alpha_N$) is constant and set to $1 \times 10^{-4}$ and $\alpha$ is set to $5 \times 10^{-4}$.

\begin{figure*}[htbp]
    \centering
    \includegraphics[width=0.99\linewidth]{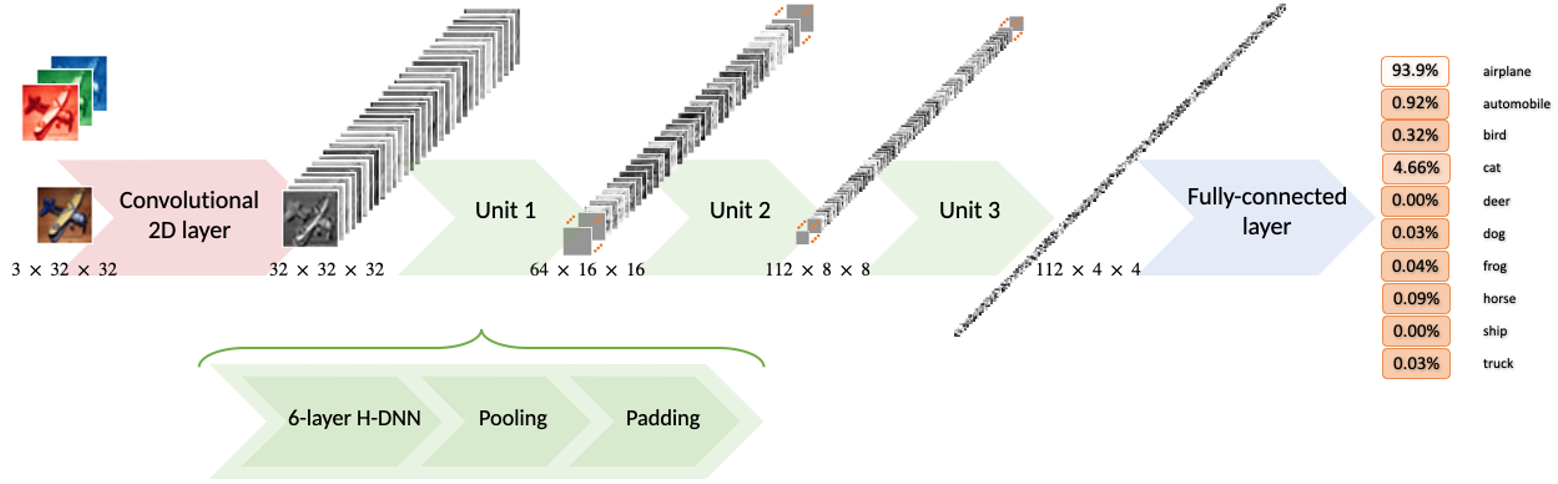}
    \caption{Scheme of the H$_1$-DNN$^*$ architecture used for classification over the CIFAR-10 dataset.}
    \label{fig:hdnn_cifar10}
\end{figure*}

\subsection{MNIST dataset}\label{ap:implementation_mnist}
We use the complete MNIST dataset (60,000 training examples and 10,000 test examples) and a mini-batch size of 100. 
For the optimization algorithm we use SGD with Adam \cite{kingma2015adam} and cross-entropy loss.
The learning rate, or optimization step size as per $\gamma$ in \eqref{eq:GD_update}, is initialized to be 0.04 with a decay rate of 
$0.8$ at each epoch. The total training step is 40 epochs. 
For MS$_1$-DNN, we set $\alpha_N = \alpha_\ell = 1\times 10^{-3}$ and $\alpha = 1\times 10^{-3}$. For H$_1$-DNN, we set $\alpha_N = \alpha_\ell = 4\times 10^{-3}$ and $\alpha = 8\times 10^{-3}$.

\section{Experiments on CIFAR-10 with enhanced H-DNNs}\label{ap:CIFAR}

We test an enhanced version of the H-DNN architecture, namely H-DNN$^*$, over a more complex dataset: CIFAR-10.\footnote{\url{https://www.cs.toronto.edu/~kriz/cifar.html}} It consists of 60000  RGB color images of $32\times32$ pixels equally distributed in 10 classes. There are 50000 training images and 10000 test images.
We summarize the accuracy results in Table~\ref{tab:cifar10} where we show that the performance of H$_1$-DNN$^*$ is comparable with state-of-the-art DNNs: AlexNet and ResNet architectures. Notice that while obtaining similar test accuracies, the AlexNet model requires using much more trainable parameters. In the next subsection, we present the details of the specific architecture of H$_1$-DNN$^*$.
    \begin{table}[htbp]
	\begin{center}
		\caption{Classification accuracies over the test set for the CIFAR-10 example when using three different architectures. The number of parameters of each architecture is provided.}
		\label{tab:cifar10}
		\begin{tabular}{|c|c|c|c|}
			\hline
			Model & \# of layers & \# of parameters (M) & Test accuracy (\%)\\ \hline
			\hline
			AlexNet & 8 & 57 & 91.55 \\  \hline
			H$_1$-DNN$^*$ & 20 & 0.97 & 92.27 \\ \hline
            ResNet-56 & 56 & 0.85 & 93.68  \\
			\hline
		\end{tabular}
	\end{center}
    \end{table}

\subsection{Implementation details of H$_1$-DNN$^*$}

The architecture of H$_1$-DNN$^*$ is similar to the one used in~\cite{Chang18a}. The parameter $\mathbf{K}$ of the H-DNN is now a convolutional operator with a filter of dimension $3\times 3$.

We summarize, in~Figure \ref{fig:hdnn_cifar10}, the architecture utilized for the H$_1$-DNN$^*$. 
First, we define a convolutional layer that increases the channels from $3$ to $32$. 
Then, three Hamiltonian units are concatenated in sequential order. 
Finally, a fully connected layer is added in order to obtain the probability of the image belonging to each of the ten classes.

Each Hamiltonian unit consists of a 6-layer H$_1$-DNN with ReLU activation function followed by an average-pooling layer and a zero-padding layer. The former performs a downsampling of the image and the latter increases the number of channels by adding zeros.
The first unit receives a 32-channel $32\times 32$ pixel image, the second, a 64-channel $16 \times 16$ image, and the last one, a 112-channel $8\times 8$ image.
The output of the last unit consists of a matrix of dimension $112 \times 4 \times 4$.
Note that this last unit does not contain the padding layer.
Prior to feeding the $112 \times 4 \times 4$ matrix into the fully connected layer, a flatten operation is performed.

The training is done using stochastic gradient descent with momentum ($\beta = 0.9$) over 160 epochs using a batch size of 100 images. We set the initial learning rate to $0.1$ and reduce it by 10 times at epochs 120, 140, and 150. We set $h = 0.1$ and $\alpha_N = \alpha_\ell = \alpha = 2\times 10^{-4}$.

\end{document}